\title[Trace norm regularization for multi-task learning]{Trace norm regularization for multi-task learning with scarce data}
\begin{document}
\raggedbottom
\setlength{\abovedisplayskip}{3pt}
\maketitle

\begin{abstract}%
  Multi-task learning leverages structural similarities between multiple tasks to learn despite very few samples. Motivated by the recent success of neural networks applied to data-scarce tasks, we consider a linear low-dimensional shared representation model.
  Despite an extensive literature, existing theoretical results either guarantee weak estimation rates or require a large number of samples per task.
  This work provides the first estimation error bound for the trace norm regularized estimator when the number of samples per task is small. 
  The advantages of trace norm regularization for learning data-scarce tasks extend to meta-learning and are confirmed empirically on synthetic datasets.
\end{abstract}

\begin{keywords}%
  Multi-task learning; Meta-learning; Trace norm regularization; Low rank matrix estimation
\end{keywords}

\section{Introduction}

Common supervised learning requires a large number of training examples, which are often costly in time and resources to acquire. The available dataset for a single task can be very limited, making impossible to learn solely based on it.
Multi-task learning instead estimates a model across multiple tasks, by leveraging structural similarities among them. It jointly uses all datasets and thus learns efficiently as already observed in numerous applications including natural language processing \citep{ando2005framework}, image segmentation \citep{cheng2011multi} and medical prediction \citep{caruana1997multitask}.

\medskip

This work considers the problem of multi-task learning, where an unknown linear low-dimensional representation is shared among different tasks \citep{rohde2011estimation}. It studies the following question: \textit{how can we learn across multiple tasks with a very limited number of observations for each of them?}
This question is also of fundamental interest to meta-learning and few-shot learning, which aim at aggregating knowledge among multiple tasks to learn a shared representation \citep{vinyals2016matching,finn2017model}.

In spite of the vast multi-task learning literature, the existing results remain unsatisfying. In particular, guarantees on trace norm regularization \citep{rohde2011estimation} and Burer-Monteiro factorization \citep{tripuraneni2021provable} both assume that the number $m$ of observations per task is large. The former assumes it is larger than the features dimension $d$, while the latter assumes it scales logarithmically in $T$, the total number of tasks. Such conditions are not always met in practice, when it is much easier to acquire high dimensional data on new tasks than on existing ones \citep[see e.g.][]{wang2017learning}.
On the other hand, the Method of Moments \citep{tripuraneni2021provable} learns with a very limited number of observations per task, but requires very specific feature distributions.

Similarly to \citet{rohde2011estimation}, we study the trace norm regularized estimator. It is a natural choice when estimating low rank matrices, since the trace norm convexifies the rank function. Trace norm based methods have already been successfully used in numerous multi-task learning applications \citep{amit2007uncovering,cheng2011multi,harchaoui2012large}, but lack theoretical guarantees when the number of observations per task is limited.

\paragraph{Contributions.} This work bounds the estimation error of the trace norm regularized estimator with a few observations per task $(m<d)$. 
The analysis becomes particularly intricate when the number of samples per task is smaller than the features dimension, since no restricted isometry condition holds \citep{rohde2011estimation}. Instead, our analysis uses a weaker restricted strong convexity condition \citep{van2009conditions}. 
Proving that this restricted strong convexity condition holds is our main technical contribution, besides upper bounding a stochastic term using concentration of heavy tailed distributions. These techniques lead to our main result, of which an informal version is given in \cref{thm:informal}.
\begin{thm}[informal]\label{thm:informal}
For any number of observations per task $m$, the trace norm regularized estimator $\hat{M}$ satisfies with high probability
\begin{equation*}
    \|\hat{M} - M^* \|_F \leq \tbigO{\sigma\sqrt{r\frac{\frac{d^2}{m}+T}{m}} + \sqrt{rd\frac{d+T}{m^2}}},
\end{equation*}
where $T$ is the number of tasks, $d$ is the dimension of the feature space, $\sigma^2$ is the variance of the label noise, $M^* \in \R^{d\times T}$ is the ground-truth parameter matrix and $r$ is its rank. The notation $\tilde{\mathcal{O}}$ hides multiplicative constants and logarithmic terms in $d,m$ and $T$.
\end{thm}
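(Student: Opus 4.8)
The plan is to run the classical two-step analysis of a norm-penalized M-estimator — a deterministic basic inequality coming from the optimality of $\hat M$, followed by probabilistic control of the two quantities it contains — but, as the introduction flags, with the restricted isometry property replaced throughout by the weaker restricted strong convexity (RSC) valid only on a low-rank cone. Write $\Delta=\hat M-M^*$ and let $\hat{\mathcal L}$ be the empirical squared-error loss. Since $\hat M$ minimizes $\hat{\mathcal L}(\cdot)+\lambda\|\cdot\|_*$, comparing with $M^*$ and expanding the quadratic yields
\begin{equation*}
\hat Q(\Delta)\ \le\ 2\langle G,\Delta\rangle+2\lambda\bigl(\|M^*\|_*-\|\hat M\|_*\bigr),
\end{equation*}
where $\hat Q(\Delta)\asymp\tfrac1m\sum_{t}\|X_t\Delta_t\|_2^2$ is the empirical quadratic form and $G$ is the (rescaled) $d\times T$ matrix whose $t$-th column aggregates the task-$t$ labels' noise against the task-$t$ design. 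First I would bound the stochastic term by trace/operator-norm duality, $\langle G,\Delta\rangle\le\|G\|_{\mathrm{op}}\|\Delta\|_*$, and use the SVD of $M^*$ to split $\|M^*\|_*-\|\hat M\|_*\le\|\Delta_{\mathrm{on}}\|_*-\|\Delta_{\mathrm{off}}\|_*$ with $\|\Delta_{\mathrm{on}}\|_*\le\sqrt{2r}\,\|\Delta\|_F$. Taking $\lambda\gtrsim\|G\|_{\mathrm{op}}$ then confines $\Delta$ to the cone $\mathcal C=\{\|\Delta_{\mathrm{off}}\|_*\le 3\|\Delta_{\mathrm{on}}\|_*\}$, on which $\|\Delta\|_*\lesssim\sqrt r\,\|\Delta\|_F$.

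Next I would supply the two probabilistic ingredients. For the stochastic term, each column of $G$ is a sum of $m$ products of a sub-Gaussian noise variable and a sub-Gaussian feature vector, hence only sub-exponential; when $m<d$ these sums are spiky and the naive sub-Gaussian operator-norm bound is unavailable, so one must bound $\|G\|_{\mathrm{op}}$ by a truncation/matrix-Bernstein argument for heavy-tailed matrices. I expect this to produce the benign piece $\sigma\sqrt{rT/m}$ together with the $d$-dependent corrections that, after the $\sqrt r$ factor from the cone, become the $\sigma\sqrt{rd^2/m^2}$ part of the first term. The second ingredient — the genuinely hard one — is the RSC statement: $\hat Q(\Delta)\ge\kappa\|\Delta\|_F^2$ for every $\Delta\in\mathcal C$ with $\|\Delta\|_F$ above a threshold $\rho$. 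Since each $X_t$ has rank at most $m<d$, $\hat Q$ is degenerate in many directions and no such bound can hold globally; the content is that the degenerate directions of distinct tasks are misaligned, so restricting to $\mathcal C$ and to $\|\Delta\|_F\gtrsim\rho$ excludes the pathological configurations.

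To prove the RSC inequality I would control the quadratic empirical process $\Delta\mapsto\hat Q(\Delta)-\mathbb E\hat Q(\Delta)$ uniformly over $\mathcal C$ intersected with a Frobenius sphere — either via a small-ball / Mendelson argument, which needs only a one-sided estimate and is robust to the heavy tails, or via chaos-process bounds expressed through the $\gamma_1,\gamma_2$ functionals of the block-diagonal design restricted to the low-rank cone. Using that the Gaussian width of $\mathcal C\cap\{\|\cdot\|_F\le 1\}$ is of order $\sqrt{r(d+T)}$, a peeling argument over the scale of $\|\Delta\|_F$ then extends RSC down to the radius $\rho\asymp\sqrt{rd(d+T)}/m$, which is exactly the second term of the theorem. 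I expect this step — extracting the correct $d/m$ inflation in $\rho$ from the interaction between per-task rank deficiency and the low-rank constraint — to be the main obstacle; the rest is bookkeeping. To finish, I would combine the pieces: if $\|\Delta\|_F<\rho$ we are done, and otherwise RSC turns the basic inequality into $\kappa\|\Delta\|_F^2\lesssim\sqrt r\,\lambda\,\|\Delta\|_F$, hence $\|\Delta\|_F\lesssim\sqrt r\,\lambda/\kappa$; inserting the operator-norm bound on $G$ that fixes the admissible $\lambda$ and adding the two regimes gives the claimed estimate, with all failure probabilities and logarithmic factors tracked through the concentration steps.
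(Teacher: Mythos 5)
Your overall architecture matches the paper's: a basic inequality from optimality, trace duality plus decomposability to confine $\hat\Delta=\hat M-M^*$ to the cone $\mathcal{C}$ with $\|\hat\Delta\|_*\lesssim\sqrt{r}\|\hat\Delta\|_F$, a heavy-tailed bound on the operator norm of the noise matrix to set $\lambda$ (the paper uses trace duality and a concentration result for distributions with tails heavier than sub-exponential, very much in the spirit of your truncation/Bernstein suggestion), and a restricted strong convexity (RSC) step to close the quadratic inequality. The gap is in how you state and propose to prove RSC. You claim $\hat Q(\Delta)\geq\kappa\|\Delta\|_F^2$ for all $\Delta\in\mathcal{C}$ with $\|\Delta\|_F\geq\rho$. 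This is false when $m<d$: take $\Delta=v e_t^\top$ with $v$ in the $r$-dimensional column space of $M^*$ (so that the off-block vanishes and $\Delta\in\mathcal{C}$) and $v$ chosen close to the $(d-m)$-dimensional kernel of the single task design $X_t$; then $\hat Q(\Delta)=\frac{1}{mT}\|X_tv\|^2$ can be made arbitrarily small while $\|\Delta\|_F=\|v\|$ is arbitrarily large. No threshold on the global Frobenius norm excludes these directions. For the same reason your proposed small-ball/Mendelson route breaks down: for a single-column $\Delta$ only the $m$ measurements of task $t$ are nonzero, so the marginal small-ball probability $\mathbb{P}(|\langle x_i^{t'},\Delta^{(t')}\rangle|\geq\tau\|\Delta\|_F)$ vanishes for all $t'\neq t$, and the uniform lower bound Mendelson's argument requires does not hold over the cone.

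The paper's fix is to prove RSC with an additive tolerance proportional to $\max_{t}\|\Delta^{(t)}\|_2^2$ rather than with a radius threshold, i.e.\ $\|\mathcal{L}(\Delta)\|_F^2\geq \frac{c_0}{T}\|\Delta\|_F^2-\frac{c_1 rd(d+T)}{m^2T}\max_t\|\Delta^{(t)}\|_2^2\ln(dT/m)$ on $\mathcal{C}$, and then to make this tolerance harmless by constraining the estimator to the set $\mathcal{W}$ of matrices with column norms at most $C$ (the task-diversity assumption), so that $\max_t\|\hat\Delta^{(t)}\|^2\leq 4C$. Your proposal never invokes this constraint, yet it is what rules out the mass-concentrated-on-one-column configurations above; without it the argument cannot be completed. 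As a secondary point, the mechanism producing the $d/m$ inflation is also different: the paper covers the set of rank-$r'$ matrices, truncates the SVD of a cone element at rank $r'\asymp rd/m$ using the $\sigma_k\lesssim\sqrt{r}/k$ decay implied by $\|\Delta\|_*\leq 4\sqrt{2r}\|\Delta\|_F$, and absorbs the tail via the operator-norm bound on $\mathcal{L}$; your chaining/peeling sketch leaves this step, which you correctly identify as the crux, unsubstantiated.
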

Note that by using linear regressions on each individual task independently, the estimation error scales as $\bigO{\sigma\sqrt{\frac{dT}{m}}}$ \citep{hsu2012random}. In the regime when the number of tasks is large, trace norm regularization thus improves this estimation by a factor $\sqrt{\frac{r}{m}}$, leveraging the low rank structure of the parameter matrix. 
A gap yet remains with the oracle baseline knowing beforehand the $r$-dimensional subspace induced by the parameters. This baseline computes linear regressions with $r$ parameters and thus has an error scaling as $\bigO{\sigma\sqrt{\frac{rT}{m}}}$.

To our knowledge, \cref{thm:informal} is the first general estimation error bound for a multi-task estimator with an arbitrarily small number of observations per task. As discussed in \cref{sec:related}, a better bound can be proven for the Method of Moments in this setting\footnote{Such a bound is proven in \cref{sec:MoM} and is a minor contribution of this work.}, but it only holds for a very specific data model (\eg Gaussian) and behaves much worse in practice as highlighted in \cref{sec:simulations}.

\cref{thm:informal} also allows in \cref{sec:meta} to bound the estimation error for a new, previously unobserved task. This result illustrates the interest of trace norm regularization for meta-learning. Finally, we compare empirically different multi-task regression methods and discuss the practical advantages/drawbacks of trace norm regularization in \cref{sec:simulations}. 

\section{Model}\label{sec:model} 

\paragraph{Notations.} In the following, $[n] \coloneqq \lbrace 1, \ldots, n \rbrace$. For a matrix $M \in \R^{d\times T}$, $M^{(t)}\in\R^d$ denotes its $t$-th column, $\lambda_i(M)$ its $i$-th largest singular value and $\|M\|_*$ its trace (or nuclear) norm, \ie $\|M\|_* = \sum_{i=1}^{\min(d,T)} \lambda_i(M)$ . We use the notation $\langle\cdot,\cdot\rangle$ for the canonical inner product both for vectors and matrices.

\medskip

\paragraph{Model.} In the remaining of the paper, we consider the model described in this section. There are $T$ tasks, each of which contains $m$ observation samples $(x_i^t,y_i^t)\in\R^d \times \R$. 
We consider the linear model
\begin{equation}\label{eq:linear-mode}
    y_i^t = \langle M^{^*\,(t)}, x_i^t\rangle + \eps_i^t \quad \text{ for any } (i,t)\in[m]\times[T],
\end{equation}
where $M^*$ is the matrix of parameters to estimate.
%
We assume in the following that $\mathrm{rank}(M^*) = r$, where $r\ll d$, and that the features and noise variables are well behaved as stated in \cref{ass:random}.
\begin{assumption}[Random design]\label{ass:random}
The $(x_i^t)$ are independent centered $1$-sub-Gaussian random variables and the $\eps_i^t$ are independent centered $\sigma$-sub-Gaussian random variables.
Moreover, the features are isotropic, \ie $\E[(x_i^t)^\top x_i^t] = I_d$.
\end{assumption}
We also assume the task diversity condition, which claims that the scale of the parameters is roughly the same for all tasks.
\begin{assumption}[Task diversity]\label{ass:diversity}
Given some constant $C$, the parameters matrix $M^*$ verifies:
\begin{equation*}
\max_{t \in [T]} \|M^{*\, (t)}\|^2 \leq C.
\end{equation*}
\end{assumption}
The task diversity assumption has been introduced by \citet{tripuraneni2021provable} and is also considered in subsequent works  \citep{thekumparampil2021sample,thekumparampil2021statistically}. It ensures that a single task does not get too significant with respect to the others. However, we do not require any lower bound on the norm of task parameters. 

\section{Related work}\label{sec:related}

This section discusses the related literature and \cref{table:multilearning} summarizes the available error bounds for the model described in \cref{sec:model}.

Different structural assumptions have been considered in the multi-task literature. For example, \citet{denevi2019learning,cesa2021multitask} assume that the task parameters all lie in a small Euclidean ball and \citet{argyriou2008convex,lounici2009taking} assume that each parameter vector is sparse and its support is shared among the tasks. In the latter, the parameter matrix $M^*$ has a small $\ell_{2,1}$ norm. This paper studies a classical structural assumption generalizing the sparse setting: the parameter matrix has a small rank.
In that case, it seems natural to consider the following estimator
\begin{equation}\label{eq:rankopt}
    \argmin_{\substack{M \in \R^{d\times T}\\\mathrm{rank}(M) \leq r}} \frac{1}{mT}\sum_{(i,t)\in[m]\times[T]}\pr{y_i^t - \langle M^{(t)}, x_i^t \rangle}^2.
\end{equation}
When the features are shared among the tasks, \ie $x_i^t= x_i^{t'}$, the considered model is equivalent to multivariate regression \citep{izenman1975reduced,obozinski2008union} and a closed form solution of \cref{eq:rankopt} is known \citep{bunea2011optimal}. \citet{maurer2016benefit} bound the error of this estimator in a general multi-task setting, using Gaussian complexity arguments. Besides holding only for bounded Lipschitz loss functions, this bound is weaker than what can be obtained for the squared loss, since it does not use any smoothness property on the loss function.

\medskip

Computing the above optimization program yet becomes intractable when the features differ among the tasks, which corresponds to the setting of interest. 
A natural approach replaces the rank constraint by a trace norm constraint, since it convexifies the rank function (similarly to the $\ell_1$ norm that convexifies the $\ell_0$ norm). Equivalently, a regularized problem can be considered:
\begin{equation*}
    \argmin_{M \in \R^{d\times T}} \frac{1}{mT}\sum_{(i,t)\in[m]\times[T]}\pr{y_i^t - \langle M^{(t)}, x_i^t \rangle}^2 + \lambda \|M\|_*.
\end{equation*}
Multi-task learning can be considered as a particular case of matrix completion. \citet{candes2011tight,rohde2011estimation} studied low-rank matrix completion, using restricted isometry conditions. In particular, \citet{rohde2011estimation} bound the error of the trace norm regularized estimator described above for multi-task learning. However, they assume a restricted isometry condition, which only holds when the number of samples per task is larger than the features dimension $(m\geq d)$, limiting the interest of this result in practice.

The only known error bounds for trace norm based methods when the number of observations per task is small ($m<d$) derive from Rademacher complexity arguments \citep{pontil2013excess,yousefi2018local}. For the same reasons as \citet{maurer2016benefit}, they only hold for Lipschitz loss functions and are weaker than what can be obtained for the squared loss.

\medskip

Other approaches yet manage to provide near tight error bounds when the number of observations per task is smaller than the features dimension. In particular, the Burer-Monteiro factorization considers the problem
\begin{equation*}
        \argmin_{\substack{U \in \R^{d\times r}\\V\in\R^{T\times r}}} \frac{1}{mT}\sum_{(i,t)\in[m]\times[T]}\pr{y_i^t - \langle (UV\tp)^{(t)}, x_i^t \rangle}^2.
\end{equation*}
In words, low-rank matrices are factorized as $M=UV\tp$. This optimization problem is equivalent to \cref{eq:rankopt}. It is not convex in its arguments $(U,V)$, but only bilinear. As a consequence, we can only aim at computing a local minimum of the objective, for example with first order optimization methods.
\citet{tripuraneni2021provable} nevertheless bounded the estimation error of any local minimum of the above optimization program. Their work is the closest in spirit to ours and is motivated by the theoretical study of meta-learning.

\citet{thekumparampil2021sample,thekumparampil2021statistically} recently improved the error guarantees of the Burer-Monteiro factorization, in terms of the distance between the estimated $r$-dimensional features subspace and the ground-truth one. It is achieved using an alternate minimization algorithm.
This allows to provide tighter estimation bounds on a new task in the meta-learning setting considered in \cref{sec:meta}, but does not improve the existing multi-task bounds. 
However, all the bounds for Burer-Monteiro factorization require that the number of samples per task scales with $\log(T)$. Since we might consider a very large number of tasks in practice, along with a limited number of samples per task, this requirement is a major drawback.

\medskip

The Method of Moments introduced by \citet{tripuraneni2021provable} is actually the only estimator that provides satisfying bounds with a very limited number of observations per task. It directly estimates the $r$-dimensional features subspace. Yet, only a bound on the error of the subspace estimation is known, besides requiring the feature distribution to be Gaussian. In \cref{sec:MoM}, we extend these results to an error bound on the whole estimated parameters matrix and to any spherically symmetric feature distribution.
This assumption on the feature distribution is yet often unverified in practice, and the Method of Moments might fail to learn the features subspace without it as shown in \cref{sec:simulations}. Moreover, it empirically performs poorly with respect to the other estimators even for Gaussian distributions, as observed in \cref{sec:simulations}.

\medskip

Multi-task classification has also been studied in previous works \citep{maurer2006bounds,cavallanti2010linear}, but is not further discussed as it is beyond the scope of this paper.

\begin{table}[t]
{{\setlength{\extrarowheight}{5pt}
\begin{adjustwidth}{-5cm}{-5cm}
\centering
\begin{tabular}{|c|c|c|c|}
\hline
\textbf{Estimator}  & \textbf{Error bound $\|\hat{M}-M^*\|_F$} 
& \textbf{Samples per task} & \textbf{Extra assumption} \\[5pt]
\hline
\begin{tabular}{@{}c@{}}Trace norm regularization \\[-3pt] \citep{rohde2011estimation}\end{tabular} & $\sigma\sqrt{r\frac{d+T}{m}}$ & $\Omega(d)$ & Deterministic features \\\hline
\begin{tabular}{@{}c@{}}Burer-Monteiro factorization \\[-3pt] \citep{tripuraneni2021provable}\end{tabular} & $\sigma\sqrt{r\frac{d+T}{m}}$  & $\Omega(r^4\log(T))$ & - \\\hline
\begin{tabular}{@{}c@{}}Method of Moments \\[-3pt] \cref{thm:MoM} adapted from\\[-3pt] \citep{tripuraneni2021provable}\end{tabular} & $\sigma\sqrt{r\frac{\sigma^2rd+T}{m}} + r\sqrt{\frac{d}{m}}$  & $\Omega(r\log(r))$ & \begin{tabular}{@{}c@{}}Spherically symmetric \\[-3pt] feature distribution\end{tabular}  \\\hline
\begin{tabular}{@{}c@{}}\textbf{Trace norm regularization} \\[-3pt] \textbf{\cref{thm:estimation1}}\end{tabular} & $\sigma\sqrt{r\frac{\frac{d^2}{m}+T}{m}} + \sqrt{rd\frac{d+T}{m^2}}$ & $\Omega(1)$ & \textbf{-} \\\hline
\end{tabular}
\end{adjustwidth}
\caption{\label{table:multilearning} Different bounds for multi-task learning. Only the dependencies in $\sigma,r,d,m,T$ are provided and eventual logarithmic terms are omitted. Our main result is highlighted in bold.}}}
\end{table}

\section{Bound on the estimation error}\label{sec:bound}

In this section, we provide error guarantees for the estimator
\begin{equation}\label{eq:nuc-opt-prob}
\hat{M} = \argmin_{M \in \mathcal{W}} \frac{1}{mT} \sum_{(i,t)\in[m]\times[T]} (y_i^t-\langle x_i^t, M^{(t)} \rangle)^2 + \lambda \| M\|_{*}
\end{equation}
where $\mathcal{W} = \left\lbrace M \in \R^{d\times T} \mid \max_{t \in [T]} \|M^{(t)}\|^2 \leq C \right\rbrace$ and $C$ is the constant introduced in \cref{ass:diversity}.
We restrict the estimator to the ball $\mathcal{W}$ for analysis purpose, but we do not need to enforce this constraint in practice, i.e. we empirically obtain good results when solving the unconstrained problem. 

\medskip

Our proof relies on the decomposability of the trace norm \citep{negahban2012unified}. 
Since the restricted isometry condition does not hold with a limited number of observations per task, we instead prove a restricted strong convexity condition. Its proof is particularly difficult, since the condition is non-uniform and considered for an intricate subset of matrices. 
On the other hand, bounding the effective noise level is challenging because of the random design model. As a consequence, our analysis uses concentration on heavy tailed distributions, while previous works on trace norm regularization use Bernstein inequalities that only hold for sub-exponential distributions.

Showing both restricted strong convexity and noise level conditions is the main technical challenge of this work.
These conditions are respectively presented in \cref{sec:rsc,sec:noise-level}.
We now state our main result, which bounds the error of the estimator defined in \cref{eq:nuc-opt-prob}. Its proof is given in \cref{sec:mainproof}.
\begin{thm}\label{thm:estimation1}
Assume $T = \Omega(d)$, for $\lambda = 4\tau$ where $\tau = \frac{c_2\sigma}{\sqrt{T}}
\sqrt{\frac{T + \nicefrac{d^2}{m}}{mT}}$, with probability at least $1- (2T+c_0)e^{-c_1 d} - 2e^{-c_1r(d+T)}$:
\begin{equation}\label{eq:main}
    \|\hat{M} - M^* \|_F \leq c \sigma\sqrt{r\frac{\frac{d^2}{m}+T}{m}} + c \sqrt{Crd\frac{d+T}{m^2}\ln\pr{\frac{dT}{m}}},
\end{equation}
where $c, c_0$, $c_1$ and $c_2$ are universal positive constants.
\end{thm}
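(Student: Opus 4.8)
The plan is to follow the standard oracle-inequality machinery for trace-norm-regularized $M$-estimation in the style of \citet{negahban2012unified}, but to replace the restricted isometry argument (which fails when $m<d$) with a restricted strong convexity (RSC) condition and to control the stochastic/noise term with heavy-tailed concentration. Write $\Delta = \hat M - M^*$. Since $\hat M$ minimizes the regularized empirical risk over $\mathcal W$ and $M^*\in\mathcal W$, a basic inequality gives
\begin{equation*}
\frac{1}{mT}\sum_{i,t}\langle x_i^t,\Delta^{(t)}\rangle^2 \;\le\; \frac{2}{mT}\sum_{i,t}\eps_i^t\langle x_i^t,\Delta^{(t)}\rangle \;+\; \lambda\big(\|M^*\|_* - \|\hat M\|_*\big).
\end{equation*}
The first term on the right is the stochastic term; I would bound it by an ``effective noise'' quantity $\tau$ times $\|\Delta\|_*$, i.e. show that with high probability $\big\|\frac{1}{mT}\sum_{i,t}\eps_i^t x_i^t (e^t)\tp\big\|_{\mathrm{op}} \le \tau$ — this is precisely the noise-level condition of \cref{sec:noise-level}, and choosing $\lambda = 4\tau$ is what makes the argument go through. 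Using decomposability of the trace norm around the rank-$r$ matrix $M^*$ (splitting $\Delta$ into the part $\Delta'$ living on the row/column spaces of $M^*$ and the orthogonal part $\Delta''$), the basic inequality upgrades to the cone condition $\|\Delta''\|_* \le 3\|\Delta'\|_*$, hence $\|\Delta\|_* \le 4\|\Delta'\|_* \le 4\sqrt{2r}\,\|\Delta\|_F$.

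Next I would invoke the RSC condition from \cref{sec:rsc}: on the relevant cone of matrices, $\frac{1}{mT}\sum_{i,t}\langle x_i^t,\Delta^{(t)}\rangle^2 \ge \kappa\|\Delta\|_F^2 - (\text{lower-order tolerance term})$, where the tolerance term is what ultimately produces the second, $\sigma$-free summand $\sqrt{Crd(d+T)/m^2}$ in \eqref{eq:main} (it reflects that with $m<d$ the empirical covariance cannot concentrate to identity, so strong convexity only holds up to an additive slack proportional to $C r d(d+T)/m^2$ via the $\mathcal W$-constraint). Combining the basic inequality, the cone bound $\|\Delta\|_*\le 4\sqrt{2r}\|\Delta\|_F$, the noise bound, and RSC yields an inequality of the shape $\kappa\|\Delta\|_F^2 \le c\tau\sqrt{r}\|\Delta\|_F + \text{tolerance}$, from which solving the quadratic in $\|\Delta\|_F$ gives $\|\Delta\|_F \lesssim \tau\sqrt{r}/\kappa + \sqrt{\text{tolerance}/\kappa}$. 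Plugging in $\tau = \frac{c_2\sigma}{\sqrt T}\sqrt{\frac{T+d^2/m}{mT}}$ gives $\tau\sqrt r \asymp \sigma\sqrt{r(d^2/m+T)}/(m T)\cdot\sqrt{T}$... more precisely $\tau\sqrt{rT}$-type bookkeeping recovers $\sigma\sqrt{r\frac{d^2/m+T}{m}}$, and the tolerance term recovers the second summand with its logarithmic factor.

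The main obstacle, as the authors themselves flag, is establishing the RSC condition (the input I am assuming from \cref{sec:rsc}): unlike RIP it is non-uniform, must hold only on the intricate cone $\{\|\Delta''\|_*\le 3\|\Delta'\|_*\}\cap(\mathcal W - M^*)$, and since each task has only $m<d$ samples the per-task design matrices are rank-deficient, so one cannot hope for a uniform lower bound $\|\Delta\|_F^2$ without the additive slack. Proving it requires a careful peeling/covering argument over the cone combined with the task-diversity constraint to tame the slack, plus heavy-tailed concentration (products of sub-Gaussians are sub-exponential, and here fourth-order interactions appear) rather than the Bernstein bounds used when $m\ge d$. Granting the RSC lemma and the noise-level lemma, the remainder — the cone decomposition, the quadratic inequality, and substituting the value of $\lambda$ — is routine; I would present it as the short ``main proof'' in \cref{sec:mainproof}, deferring the two hard lemmas to their own sections. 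The probability budget $1-(2T+c_0)e^{-c_1 d} - 2e^{-c_1 r(d+T)}$ is just the union of the failure events of these two lemmas (the $2T$ term from a per-task event in the noise/RSC analysis, the $e^{-c_1 r(d+T)}$ term from an $\eps$-net over rank-$r$-ish matrices), which is why $T=\Omega(d)$ is needed to keep it nontrivial.
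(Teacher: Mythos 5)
Your proposal is correct and follows essentially the same route as the paper: the basic inequality from optimality over $\mathcal{W}$, trace duality to reduce the noise term to the operator norm of $\frac{1}{mT}\sum_{i,t}\eps_i^t e_t (x_i^t)^\top$ with $\lambda=4\tau$, decomposability of the trace norm around $M^*$ to obtain the cone condition and $\|\hDelta_1\|_*\le\sqrt{2r}\|\hDelta\|_F$, the non-uniform RSC on that cone with a tolerance term absorbed via the $\mathcal{W}$-constraint ($\max_t\|\hDelta^{(t)}\|^2\le 4C$), and finally solving the resulting quadratic in $\|\hDelta\|_F$. The only differences are cosmetic bookkeeping (e.g.\ bounding $6\tau\|\hDelta_1\|_*$ directly rather than passing through $\|\hDelta\|_*\le 4\|\hDelta'\|_*$), so the argument matches the paper's \cref{sec:mainproof}.
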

%
This bound is of the same order as the known error bound when the number of samples per task is larger than the dimension \citep{rohde2011estimation}. This result is of great significance when $m<\min\pr{d,\log(T)}$ since it provides the first estimation guarantees in this case. We recall it is the regime of interest in most applications.
It illustrates the success of trace-norm methods in multi-task learning settings. It indeed leads to a $\sqrt{\frac{r}{m}}$ estimation improvement with respect to the single-task baseline, which proceeds to $T$ independent linear regressions.

\medskip

We believe that the extra $\sqrt{\frac{d}{m}}$ factor in the second term is only an artefact of the analysis as explained in \cref{sec:rsc}. It is confirmed empirically. 
In this case, leveraging the low rank structure of the parameter matrix through the trace norm regularization would lead to a $\sqrt{\frac{r}{d}}$ improvement over single-task learning and the trace norm regularized estimator would be comparable to the baseline oracle that knows beforehand the $r$-dimensional subspace induced by the parameters.

\subsection{Restricted strong convexity}\label{sec:rsc}

To define the restricted strong convexity condition, we first need to define matrices $U \in \R^{d\times d}$ and $V\in \R^{T\times T}$, such that the SVD of $M^*$ reads $M^* = U \Sigma^* V^\top$, where $\Sigma^* \in \R^{d\times T}$ has only its first $r$ diagonal elements that are non-zero. We now define the following cone of matrices, which is key to the analysis
\begin{equation}\label{eq:cone}
    \cC = \Bigg\lbrace\Delta \in \R^{d\times T} \mid \left\|\Delta_{22}\right\|_{*} \leq 3\bigg\|
    \begin{tikzpicture}[baseline={([yshift=-0.9ex]current bounding box.center)},
every left delimiter/.style={xshift=.75em},
    every right delimiter/.style={xshift=-.75em},
style1/.style={
  matrix of math nodes,
  column sep=0pt,
  row sep =5pt,
  every node/.append style={text width=#1,align=center},
  nodes in empty cells,
  left delimiter=(,
  right delimiter=),
  },
]
\matrix[style1=0.55cm] (1mat)
{
  &\\
  &\\
};
\node  at (1mat-1-1) {$\Delta_{11}$};
\node  at (1mat-2-1) {$\Delta_{21}$};
\node  at (1mat-1-2) {$\Delta_{12}$};
\node  at (1mat-2-2) {$0$};
  \draw[dashed] ($(1mat-2-2.north east)+(-1pt,2pt)$) --++(180:45pt);
  \draw[dashed] ($(1mat-2-2.south west)+(0pt,-4pt)$) --++(90:29pt);
\end{tikzpicture}
\bigg\|_{*}  \text{ where } \Delta = U 
\begin{tikzpicture}[baseline={([yshift=-2.5ex]current bounding box.center)},
every left delimiter/.style={xshift=.75em},
    every right delimiter/.style={xshift=-.75em},
style1/.style={
  matrix of math nodes,
  column sep=0pt,
  row sep =5pt,
  every node/.append style={text width=#1,align=center},
  nodes in empty cells,
  left delimiter=(,
  right delimiter=),
  },
]
\matrix[style1=0.55cm] (1mat)
{
  &\\
  &\\
};
\node  at (1mat-1-1) {$\Delta_{11}$};
\node  at (1mat-2-1) {$\Delta_{21}$};
\node  at (1mat-1-2) {$\Delta_{12}$};
\node  at (1mat-2-2) {$\Delta_{22}$};
\draw[decoration={brace,raise=5pt},decorate,color=vert]
  ($(1mat-1-1.south west)+(0,-1.5pt)$) -- 
  node[left=15pt,font=\tiny,sloped, above=5pt] {$r$} 
  ($(1mat-1-1.north west)+(0,3pt)$);
\draw[decoration={brace,raise=5pt},decorate,color=vert]
  ($(1mat-2-1.south west)+(0,-3pt)$) -- 
  node[font=\tiny, sloped, above=5pt] {$d-r$} 
  ($(1mat-2-1.north west)+(0,1.5pt)$);
\draw[decoration={brace,raise=5pt},decorate,color=vert]
  ($(1mat-1-1.north west)+(5pt,0)$) -- 
  node[above=7pt,font=\tiny] {$r$} 
  ($(1mat-1-1.north east)+(-3pt,0)$);
\draw[decoration={brace,raise=5pt},decorate,color=vert]
  ($(1mat-1-2.north west)+(3pt,0)$) -- 
  node[above=7pt,font=\tiny] {$T-r$} 
  ($(1mat-1-2.north east)-(5pt,0)$);
  \draw[dashed] ($(1mat-2-2.north east)+(-1pt,2pt)$) --++(180:45pt);
  \draw[dashed] ($(1mat-2-2.south west)+(0pt,-4pt)$) --++(90:29pt);
\end{tikzpicture} 
V^\top  \Bigg\rbrace .
\end{equation}
Note that matrices of the form $U     
\begin{tikzpicture}[baseline={([yshift=-0.9ex]current bounding box.center)},
every left delimiter/.style={xshift=.75em},
    every right delimiter/.style={xshift=-.75em},
style1/.style={
  matrix of math nodes,
  column sep=0pt,
  row sep =5pt,
  every node/.append style={text width=#1,align=center},
  nodes in empty cells,
  left delimiter=(,
  right delimiter=),
  },
]
\matrix[style1=0.55cm] (1mat)
{
  &\\
  &\\
};
\node  at (1mat-1-1) {$\cdot$};
\node  at (1mat-2-1) {$\cdot$};
\node  at (1mat-1-2) {$\cdot$};
\node  at (1mat-2-2) {$0$};
  \draw[dashed] ($(1mat-2-2.north east)+(-1pt,2pt)$) --++(180:45pt);
  \draw[dashed] ($(1mat-2-2.south west)+(0pt,-4pt)$) --++(90:29pt);
\end{tikzpicture}
V^\top$ are of rank at most $2r$ with the block dimensions given in \cref{eq:cone}. Any matrix in $\cC$ is thus close to low-rank, since a submatrix of rank $2r$ counts for a significant amount of its nuclear norm.
We now define the linear operator $\mathcal{L}:\R^{d\times T} \to \R^{m\times T}$ 
\begin{gather*}
\mathcal{L}: M \mapsto\frac{1}{\sqrt{mT}} (\langle x_{i}^{t},
M^{(t)} \rangle)_{\substack{1\leq i\leq m\\1\leq t \leq T}}, 
\end{gather*}
which is lower bounded in norm over $\cC$ with high probability by \cref{lemma:rsc} below.
\begin{lemm}[Restricted strong convexity]\label{lemma:rsc}
With probability larger than $1-2e^{-cr(d+T)}-2Te^{-d}$, the operator $\cL$ satisfies
\begin{equation}\label{eq:rsc}
    \| \cL(\Delta) \|_F^2 \geq \frac{c_0}{T} \| \Delta\|_{F}^2 - \frac{c_1 rd(d+T)}{m^2T} \max_{t \in [T]} \| \Delta^{(t)} \|_2^2\ln\pr{\frac{dT}{m}} \qquad \text{for all } \Delta\in\cC,
\end{equation}
where $c,c_0$ and $c_1$ are positive universal constants.
\end{lemm}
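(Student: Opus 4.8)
The plan is to deduce \cref{lemma:rsc} from a uniform lower bound on a centered quadratic empirical process. Writing $X_t\in\R^{m\times d}$ for the matrix with rows $(x_i^t)^\top$, we have $\|\cL(\Delta)\|_F^2=\frac1{mT}\sum_{t=1}^T\langle X_t^\top X_t\,\Delta^{(t)},\Delta^{(t)}\rangle$, so by isotropy $\E\|\cL(\Delta)\|_F^2=\frac1T\|\Delta\|_F^2$ and it is enough to lower bound the deviation
\[
E(\Delta):=\|\cL(\Delta)\|_F^2-\tfrac1T\|\Delta\|_F^2=\frac1{mT}\sum_{t=1}^T\sum_{i=1}^m\big(\langle x_i^t,\Delta^{(t)}\rangle^2-\|\Delta^{(t)}\|^2\big)
\]
uniformly over $\cC$, i.e.\ to show $|E(\Delta)|\le\frac{c_0'}{T}\|\Delta\|_F^2+c_1\frac{rd(d+T)}{m^2T}\max_t\|\Delta^{(t)}\|^2\ln\frac{dT}{m}$ for some constant $c_0'<1$; the claim then follows by the triangle inequality. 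Since $E$ is $2$-homogeneous, it suffices to treat $\|\Delta\|_F=1$.

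First I would extract from $\cC$ the only structural feature that matters. The block decomposition in \cref{eq:cone} writes each $\Delta\in\cC$ as $\Delta=\Delta_A+\Delta_B$ with $\Delta_A$ of rank at most $2r$, the two blocks Frobenius-orthogonal, and $\|\Delta_B\|_*\le3\|\Delta_A\|_*$; hence $\|\Delta\|_*\le4\|\Delta_A\|_*\le4\sqrt{2r}\,\|\Delta_A\|_F\le4\sqrt{2r}\,\|\Delta\|_F$. So it is enough to run the argument over $\{\Delta:\|\Delta\|_*\le4\sqrt{2r},\ \|\Delta\|_F=1\}$ — the intersection of a nuclear-norm ball with the Frobenius sphere — which is exactly the set produced by the decomposability of $\|\cdot\|_*$.

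The core is a peeling argument over the spikiness $\gamma(\Delta):=\max_t\|\Delta^{(t)}\|\in[T^{-1/2},1]$. For a fixed $\Delta$, $E(\Delta)$ is an average of $mT$ independent centered sub-exponential variables $\langle x_i^t,\Delta^{(t)}\rangle^2-\|\Delta^{(t)}\|^2$ whose $\psi_1$-norms are $\lesssim\|\Delta^{(t)}\|^2$, so Bernstein's inequality gives $\Pr(|E(\Delta)|>s)\le2\exp\!\big(-c\min(mT^2s^2/\gamma(\Delta)^2,\,mTs/\gamma(\Delta)^2)\big)$, with both scales governed by $\gamma(\Delta)$. I would slice the Frobenius sphere into $O(\log T)$ dyadic spikiness shells $\{\gamma(\Delta)\in(2^{-j-1},2^{-j}]\}$ and, on each shell, bound $\sup|E(\Delta)|$ over the nuclear ball of radius $4\sqrt{2r}$ by a chaining argument: the nuclear constraint keeps the relevant complexity at order $r(d+T)$ up to logarithmic factors (truncation to the top $O(r/\varepsilon^2)$ singular values gives an $\varepsilon$-entropy $\lesssim\frac{r(d+T)}{\varepsilon^2}\ln\frac{dT}{m}$, which is summable along the chain), while the spikiness controls both the weak variance and the sub-exponential range of the quadratic increments. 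Feeding this into the shell-wise tail and performing an AM–GM split of the linear-in-$s$ contribution (using $m\le d$) yields the desired uniform bound with correction $c_1\frac{rd(d+T)}{m^2T}\gamma(\Delta)^2\ln\frac{dT}{m}$. The passage from the discretization to the full shell requires controlling the quadratic increments through $\max_t\|\tfrac1mX_t^\top X_t-I_d\|_{\mathrm{op}}$, which is $O(d/m)$ on an event of probability $1-Te^{-cd}$ (here $T=\Omega(d)$ is used); a final union bound over the $O(\log T)$ shells and over this operator-norm event produces the probability stated in \cref{lemma:rsc}.

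The \emph{main obstacle} is the clash of the two scales in the problem. Because $m<d$, no single task obeys a restricted-isometry lower bound, so the whole bound must come from averaging over the $T$ tasks — and the deviation along a direction concentrated on one task genuinely attains the $\Theta(d/m)$ size of $\|\tfrac1mX_t^\top X_t\|_{\mathrm{op}}$, which both forces the $\max_t\|\Delta^{(t)}\|^2$ correction and makes keeping the failure probability down to $e^{-cr(d+T)}$ — so that the $e^{O(r(d+T)\,\mathrm{polylog})}$ nuclear-ball net can be afforded — genuinely delicate. That same operator norm enters the control of the quadratic increments along the chain and is what inflates the correction by the extra $\sqrt{d/m}$ factor relative to the idealized $\frac{r(d+T)}{mT}$ — an artefact of the analysis, as noted after \cref{thm:estimation1}, rather than of the estimator.
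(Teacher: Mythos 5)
Your proposal follows essentially the same route as the paper's proof: reduce via decomposability to matrices satisfying $\|\Delta\|_*\le 4\sqrt{2r}\,\|\Delta\|_F$, apply a pointwise Bernstein bound whose deviation is governed by $\max_t\|\Delta^{(t)}\|$, truncate the singular values at effective rank $\asymp rd/m$ (equivalently, discretize at scale $\eps\asymp\sqrt{m}/(\sqrt{d}+\sqrt{m})$), and absorb the remainder using the operator-norm bound $\|\cL\|_2^2\lesssim (d+m)/(mT)$ --- which is exactly where the extra $d/m$ factor enters. The one imprecision is your claim that the $\eps$-entropy $\asymp r(d+T)/\eps^2$ is ``summable along the chain'': the sub-exponential part of such a chaining bound diverges, which is precisely why the discretization must stop at $\eps\asymp\sqrt{m/d}$ and hand off to the operator norm, as you (and the paper, which uses a single net and carries $\max_t\|M^{(t)}\|$ through the union bound instead of peeling over spikiness shells) in fact do.
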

This condition generalizes the restricted eigenvalue condition, which is used to prove estimation guarantees of the Lasso or Dantzig estimator in the problem of sparse linear regression \citep{van2009conditions}. It is weaker than the restricted isometry property, which does not hold in the multi-task setting \citep{rohde2011estimation}.

\medskip

The proof of \cref{lemma:rsc} is deferred to \cref{sec:rscproof}. 
%
%
Lemma~11 by \citet{tripuraneni2021provable} states a similar condition on the subset of matrices of rank at most $2r$. Besides correcting minor errors in its proof, we extend this condition to the set $\cC$, which is much larger: its $\eps$-covering number scales exponentially with $\frac{1}{\eps^2}$. As a consequence, our bound includes an additional $\frac{d}{m}$ factor in the last term.
Although we believe this $\frac{d}{m}$ factor to be an artefact of the analysis, covering arguments might not yield better bounds, since the considered subset for the restricted strong convexity condition is much larger here. We let the investigation of more advanced techniques as future work.

\subsection{Effective noise level}\label{sec:noise-level}

The next lemma bounds the effect of label noise on the prediction. 
\begin{lemm}[Effective noise level]\label{lemma:noise-level}
If $T = \Omega(d)$, then for universal positive constants $c_0,c_1,c_2$, with probability at least $1-(2T+c_0)e^{-c_1 d}$:
\begin{gather}\label{eq:noise-level}
\bigg| \frac{1}{mT} \sum_{(i,t)\in[m]\times[T]} \varepsilon_i^t \langle
x_i^t, M^{(t)} \rangle \bigg| \leq \tau \| M\|_{*} \qquad \text{uniformly for all } M \in \mathbb{R}^{d\times T}, 
\end{gather}
where $\tau=\frac{c_2\sigma}{\sqrt{T}}
\sqrt{\frac{T + \nicefrac{d^2}{m}}{mT}}$.
\end{lemm}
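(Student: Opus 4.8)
The plan is to bound the left-hand side by duality: since $\left|\frac{1}{mT}\sum_{i,t}\eps_i^t\langle x_i^t,M^{(t)}\rangle\right| = \left|\langle W, M\rangle\right|$ where $W\in\R^{d\times T}$ is the matrix with columns $W^{(t)} = \frac{1}{mT}\sum_{i=1}^m \eps_i^t x_i^t$, and since the dual norm of the trace norm is the operator norm, it suffices to show $\|W\|_{\mathrm{op}} \le \tau$ with the stated probability. So the whole lemma reduces to a high-probability bound on the spectral norm of the random matrix $W$ whose columns are independent (across $t$) sums of $m$ i.i.d.\ products $\eps_i^t x_i^t$.

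The main obstacle is that each summand $\eps_i^t x_i^t$ is a product of a sub-Gaussian scalar and a sub-Gaussian vector, hence only \emph{sub-exponential} (heavy-tailed relative to sub-Gaussian); worse, $\|W^{(t)}\|_2$ itself is a sum of $m$ sub-exponential terms divided by $mT$, so its typical size is of order $\frac{\sigma\sqrt{d}}{\sqrt{m}\,T}$ and its tail is only exponential, not Gaussian. I would handle this in two regimes. First, control $\|W\|_{\mathrm{op}} \le \|W\|_F = \sqrt{\sum_t \|W^{(t)}\|_2^2}$ crudely; each $\|W^{(t)}\|_2^2$ concentrates around $\frac{\sigma^2 d}{mT^2}$ (Hanson–Wright / sub-exponential concentration for $\frac{1}{m}\sum_i \eps_i^t x_i^t$), and summing over the $T$ independent tasks gives, via a Bernstein bound for the sum of these $T$ sub-exponential variables, that $\|W\|_F^2 \lesssim \frac{\sigma^2 d}{mT}$ with probability $1 - e^{-c_1 d}$ once $T = \Omega(d)$ — but this only yields $\tau \sim \sigma\sqrt{d/(mT)}$, which is too weak. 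To get the sharper $\tau = \frac{c_2\sigma}{\sqrt T}\sqrt{\frac{T+d^2/m}{mT}}$, I would instead go through an $\eps$-net argument on $S^{d-1}\times S^{T-1}$ for the bilinear form $u^\top W v = \frac{1}{mT}\sum_{i,t} \eps_i^t \langle x_i^t, u\rangle v_t$, controlling for fixed $(u,v)$ this sum of $mT$ independent mean-zero sub-exponential terms $\eps_i^t\langle x_i^t,u\rangle v_t$ by Bernstein's inequality — the sub-exponential norm of the $(i,t)$ term is $O(\sigma |v_t|)$, so $\sum_{i,t}\|\cdot\|_{\psi_1}^2 = O(\sigma^2 m)$ and $\max \|\cdot\|_{\psi_1} = O(\sigma \max_t|v_t|) \le O(\sigma)$. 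Bernstein then gives a bound with a Gaussian piece of order $\frac{\sigma\sqrt{m}}{mT}\sqrt{x}$ and a linear piece of order $\frac{\sigma}{mT}x$; taking $x \asymp d+T$ to absorb the $e^{(d+T)}$ covering cost of the net on $S^{d-1}\times S^{T-1}$ yields exactly the two terms: $\frac{\sigma}{\sqrt{mT}}\cdot\frac{\sqrt{d+T}}{\sqrt T}$ from the Gaussian piece (dominant when $m \gtrsim d$, matching $\sigma\sqrt{(T)/(mT)}\cdot$\,) and $\frac{\sigma(d+T)}{mT}$ from the linear piece, which is $\frac{\sigma}{\sqrt T}\sqrt{\frac{d^2/m}{mT}}$ up to the $T=\Omega(d)$ reduction $d+T \asymp T$, i.e.\ $\frac{\sigma d}{mT}\sqrt{T/d}\cdot(\cdots)$ — collecting these reproduces $\tau$.

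Concretely the steps are: (i) rewrite the sum as $\langle W, M\rangle$ and invoke trace-norm/operator-norm duality to reduce to $\|W\|_{\mathrm{op}}\le\tau$; (ii) fix unit vectors $u,v$ and apply Bernstein's inequality to the sub-exponential sum $\sum_{i,t}\eps_i^t\langle x_i^t,u\rangle v_t$, after checking the sub-exponential norms using that $\eps_i^t$ is $\sigma$-sub-Gaussian, $\langle x_i^t,u\rangle$ is $1$-sub-Gaussian, hence their product is sub-exponential with norm $O(\sigma)$, and that $\sum_t v_t^2 = 1$; (iii) take a $\frac14$-net $\mathcal N_u$ of $S^{d-1}$ and $\mathcal N_v$ of $S^{T-1}$, of cardinality $\le 9^d$ and $\le 9^T$, union-bound over $\mathcal N_u\times\mathcal N_v$ with deviation level chosen $\asymp d+T$, and pass from the net to the sphere by the standard $\|W\|_{\mathrm{op}} \le 2\max_{\mathcal N_u\times\mathcal N_v} u^\top W v$ argument; (iv) simplify using $T=\Omega(d)$ so that $d+T \asymp T$ and rearrange the resulting bound into the form $\tau = \frac{c_2\sigma}{\sqrt T}\sqrt{\frac{T+d^2/m}{mT}}$, absorbing the $2Te^{-c_1 d}$ type terms (these arise if one additionally needs $\max_t \|x\text{-related}\|$ truncation bounds, accounting for the $(2T+c_0)e^{-c_1 d}$ in the probability). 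The heavy-tailed Bernstein step in (ii)–(iii) is the crux; everything else is bookkeeping.
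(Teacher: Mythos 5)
Your reduction in step (i) is exactly the paper's: by trace duality it suffices to bound the operator norm of $W=\frac{1}{mT}\sum_{i,t}\eps_i^t e_t (x_i^t)^\top$. The gap is in steps (ii)--(iii). A two-sided net over $S^{d-1}\times S^{T-1}$ costs $e^{c(d+T)}$ in the union bound, so Bernstein must be evaluated at deviation level $s$ with $\min\bigl(s^2/(\sigma^2 m),\, s/\sigma\bigr)\gtrsim d+T$. The sub-exponential (linear) branch then forces $s\gtrsim \sigma(d+T)\asymp\sigma T$, giving a contribution $\frac{\sigma(d+T)}{mT}\asymp\frac{\sigma}{m}$ to $\|W\|_{\mathrm{op}}$. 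Your write-up silently replaces this by $\frac{\sigma d}{mT}$ (``up to the $T=\Omega(d)$ reduction''), but $\frac{\sigma(d+T)}{mT}$ and $\frac{\sigma d}{mT}$ differ by a factor $T/d$, and in the regime of interest ($m<d\le T$) the term $\frac{\sigma}{m}$ dominates both pieces of the claimed $\tau\asymp\frac{\sigma}{\sqrt{mT}}+\frac{\sigma d}{mT}$ by a factor of order $\min(\sqrt{T/m},\,T/d)$. So the argument as written does not reproduce $\tau$; this is the standard failure of naive net-plus-Bernstein for matrices with sub-exponential (rather than sub-Gaussian) entries, and it cannot be patched by bookkeeping.

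The paper avoids paying the $e^{cT}$ covering price on the $T$-side entirely: it takes a net only on $S^{d-1}$ (cost $9^d$) and, for fixed $y\in S^{d-1}$, controls the full column sum $\|Ay\|_2^2=\sum_{t=1}^T Z_t$ with $Z_t=\frac{1}{m}\bigl(\sum_i \frac{\eps_i^t}{\sigma}\langle x_i^t,y\rangle\bigr)^2$. These $Z_t$ are independent, nonnegative, mean one, but their tails decay only like $\exp\bigl(-c\min(\eps,\sqrt{m\eps})\bigr)$ --- heavier than sub-exponential --- so even here Bernstein does not apply; the paper instead invokes the heavy-tailed concentration result of Bakhshizadeh et al.\ (2020), which concentrates $\sum_t Z_t$ around $T$ at deviation $\eps\asymp 1+\frac{d^2}{mT}$ with probability $1-(2T+c)e^{-c'd}$ (the $2T$ factor comes from the truncation step in that result, matching your guess about its origin). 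This yields $\|A\|_2\lesssim\sqrt{T+d^2/m}$ and hence $\tau$. If you want to salvage your route, you would need to replace Bernstein on the bilinear form by a one-sided net combined with a genuinely heavy-tailed concentration inequality for the sum over $t$ --- which is precisely the paper's main technical device here.
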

The proof is given in \cref{proof:noise-level} and uses concentration results on heavy tailed distributions \citep{bakhshizadeh2020sharp} to bound the operator norm of the matrix $\frac1{mT}\sum_{(i,t)\in[m]\times[T]}\eps_i^t e_t (x_i^t)\tp$.

\medskip

For ``nice'' fixed features $x_i^t$, the best possible bound on $\tau$ is of order $\frac{\sigma}{\sqrt{T}}
\sqrt{\frac{T + d}{mT}}$ using classical bounds on the spectral norm of Gaussian matrices. Our bound is thus tight, up to the $\frac{d^2}{m}$ term, which is actually due to the randomness of the features $x_i^t$. Because of the random design, we cannot directly use Bernstein inequality but instead use concentration on heavy tailed distributions.
In any case, when the number of tasks is large enough, the $T$ term prevails over $\frac{d^2}{m}$ and the bound becomes similar to the easier setting of fixed features.

\subsection{Proof of \cref{thm:estimation1}}\label{sec:mainproof}

The proof assumes that \cref{eq:rsc,eq:noise-level} both hold, which happens with high probability thanks to \cref{lemma:rsc,lemma:noise-level}. 
By definition, the estimator $\hat{M}$ minimizes the objective function in $\mathcal{W}$, to which $M^*$ belongs, thanks to \cref{ass:diversity}. In particular:
\begin{equation}\label{eq:optimal}
 \frac{1}{mT} \sum_{(i,t)\in[m]\times[T]} (y_i^t-\langle x_i^t, \hat{M}^{(t)} \rangle)^2 + \lambda \| \hat{M}\|_{*} \leq  \frac{1}{mT} \sum_{(i,t)\in[m]\times[T]} (y_i^t-\langle x_i^t, M^{*\,(t)} \rangle)^2 + \lambda \| M^*\|_{*}.
\end{equation}
Using simple manipulations, this inequality is equivalent for $\hDelta = \hat{M}-M$ to
\begin{equation*}
\|\cL(\hDelta)\|^2 \leq \frac{2}{mT}\sum_{(i,t)\in[m]\times[T]} \eps_i^t \langle x_i^t, \hDelta^{(t)}\rangle + \lambda\pr{\| M^*\|_{*} - \|\hat{M}\|_{*}}
\end{equation*}
Thanks to \cref{eq:noise-level}, this becomes
$
\|\cL(\hDelta)\|^2 \leq 2\tau\|\hDelta\|_* + \lambda\pr{\| M^*\|_{*} - \|\hat{M}\|_{*}}$.\\
With the SVD decomposition $M^* = U \Sigma^* V^\top$, we can decompose $\hDelta = \hDelta_1 + \hDelta_2$ where
\begin{equation*}
\hDelta = U \begin{pmatrix}
  \hDelta_{11}  & \hDelta_{12}\\
  \hDelta_{21}  & \hDelta_{22}
\end{pmatrix} V^\top; \quad 
\hDelta_1 = U \begin{pmatrix}
  \hDelta_{11} &  \hDelta_{12}\\
  \hDelta_{21} &  0
\end{pmatrix} V^\top \quad\text{and} \quad 
\hDelta_2 = U \begin{pmatrix}
  0 & 0\\
  0 & \hDelta_{22}
\end{pmatrix} V^\top.
\end{equation*}
Using the triangle inequality, $\|\hat{M}\|_{*} \geq \|M^* + \hDelta_2\|_{*} - \|\hDelta_1\|_{*}$. Moreover, as $M^*$ and $\hDelta_2$ have orthogonal column and rowspaces, $\|M^* + \hDelta_2\|_{*} = \|M^*\|_* + \|\hDelta_2\|_{*}$. \cref{eq:optimal} then leads with the choice $\lambda=4\tau$ to
\begin{equation*}
\|\cL(\hDelta)\|^2 \leq 6\tau\|\hDelta_1\|_* - 2\tau\|\hDelta_2\|_{*}.
\end{equation*}
From there, we can first observe that $3\|\hDelta_1\|_*\geq \|\hDelta_2\|_*$, i.e. $\hDelta \in \cC$.
Moreover, since $\hDelta_1$ is of rank at most $2r$, $\|\hDelta_1\|_* \leq \sqrt{2r}\|\hDelta_1\|_F\leq \sqrt{2r}\|\hDelta\|_F$, i.e.
\begin{equation}\label{eq:upperbound}
\|\cL(\hDelta)\|^2 \leq 6\tau\sqrt{2r}\|\hDelta\|_F.
\end{equation}
Now since $\hDelta \in \cC$, \cref{eq:rsc} directly gives
\begin{equation}\label{eq:lowerbound}
\|\cL(\hDelta)\|^2  \geq \frac{c_0}{T} \| \hDelta\|_{F}^2 - \frac{c_1 rd(d+T)}{m^2T} \max_{t \in [T]} \| \hDelta^{(t)} \|_2^2\ln\pr{\frac{dT}{m}}.
\end{equation}
Moreover, as $\hat{M}$ and $M^*$ are both in $\mathcal{W}$, $\max_t\| \hDelta^{(t)} \|_2^2 \leq 4 C$. Combining \cref{eq:upperbound,eq:lowerbound}, this yields
\begin{equation*}
\frac{c_0}{T} \| \hDelta\|_{F}^2 - 6\tau\sqrt{2r}\|\hDelta\|_F - \frac{4Cc_1 rd(d+T)}{m^2T} \ln\pr{\frac{dT}{m}} \leq 0.
\end{equation*}
Note that the left expression is a $2$-degree polynomial in $\| \hDelta\|_{F}$. To be non-positive, it requires \cref{eq:main} to hold, which concludes the proof.
 
\section{Meta-learning: transfer on a new task}\label{sec:meta}

%
Meta-learning is of significant interest in modern applications, where the knowledge acquired on previous tasks is transferred to a single new task.
The objective of the meta-learning setting is to estimate the parameters of a new single task, based on the regression obtained on the previous $T$ tasks.
This section provides estimation error guarantees for this setting.

\medskip

In the following, we use the decomposition $M^* = B \alpha$ where $B \in \R^{d\times r}$ and $\alpha\in \R^{r\times T}$ such that $B^\top B = I_r$\footnote{$B$ corresponds to the $r$ first columns of $U$ defined in \cref{sec:rsc}, up to any rotation of $\R^{r}$.}. Now consider the matrix $\tilde{M}$, defined as the rank $r$ matrix which is the closest to~$\hat{M}$:
\begin{equation}\label{eq:projrank}
    \tilde{M} \in \argmin_{\substack{L \in \R^{d\times T}\\ \mathrm{rank(L)}\leq r}} \|L - \hat{M}\|_F.
\end{equation}
$\tilde{M}$ can be computed from the SVD of $\hat{M}$ by keeping its $r$ largest singular values. Now decompose $\tilde{M}$ as $\tilde{M} = \tilde{B} \tilde{\alpha}$ where $\tilde{B} \in \R^{d\times r}$, $ \tilde{\alpha}\in \R^{r\times T}$ and $\tilde{B}^\top \tilde{B} = I_r$.

\medskip

The meta-learning setting considers a $T+1$-st task, with $m$ observations $(x_i^{T+1}, y_i^{T+1})$ generated as described in \cref{sec:model}. Using the estimated subspace matrix $\tilde{B}$, we compute the least squares estimate
\begin{equation*}
    \tilde{\alpha}_{T+1} \in \argmin_{\theta \in \R^r} \sum_{i=1}^{m} \pr{y_i^{T+1}-\langle \tilde{B}\theta, x_i^{T+1} \rangle}^2.
\end{equation*}
The idea behind meta-learning is that even for a small number of observations $m$ on this single task, the parameters vector $M^{*\,(T+1)}$ can still be well estimated using the $T$ previous tasks.

\medskip

Before bounding the error $\|\tilde{B}\tilde{\alpha}_{T+1} - M^{*\,(T+1)}\|_2$, \cref{coro:angle} bounds the angles between the $r$-dimensional subspaces corresponding to $B$ and $\tilde{B}$.
It is adapted from \citet[][Lemma~16]{tripuraneni2021provable} to take into account the additional error that might appear from the low-rank projection step given in \cref{eq:projrank}. Its proof is given in \cref{sec:angleproof}.
\begin{lemm}\label{coro:angle}
For $B$ and $\tilde{B}$ defined as above:
\begin{equation*}
    \sin^2\theta\pr{B,\tilde{B}} \leq \frac{4r\|\hat{M}-M^*\|_F^2}{T\nu},
\end{equation*}
where $\nu = r \lambda_r\pr{\frac{M^* M^{*\,\top}}{T}}$ and $\theta\pr{B,\tilde{B}}$ is the principal angle between the subspaces corresponding to the orthogonal projections $B$ and $\tilde{B}$.
\end{lemm}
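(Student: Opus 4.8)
The plan is to control $\sin\theta(B,\tilde{B})$ via a Davis--Kahan-type argument, relating the principal angle to the distance $\|\tilde M - M^*\|_F$ and then to $\|\hat M - M^*\|_F$ through the low-rank projection in \cref{eq:projrank}. First I would recall that since $\tilde M = \tilde B\tilde\alpha$ has rank at most $r$ with $\tilde B^\top\tilde B = I_r$, the column space of $\tilde B$ equals the range of $\tilde M$, and similarly $B$ spans the range of $M^*$. A standard consequence of the Davis--Kahan $\sin\theta$ theorem (or the subspace perturbation bound used as Lemma~16 by \citet{tripuraneni2021provable}) is that
\begin{equation*}
\sin\theta(B,\tilde{B}) \leq \frac{\|(\tilde M - M^*) V_{M^*}\|_{\mathrm{op}}}{\lambda_r(M^*)} \leq \frac{\|\tilde M - M^*\|_F}{\lambda_r(M^*)},
\end{equation*}
where $V_{M^*}$ is the top-$r$ right singular subspace of $M^*$. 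Writing $\lambda_r(M^*)^2 = T\lambda_r(M^*M^{*\top}/T) = T\nu/r$, this gives $\sin^2\theta(B,\tilde B) \leq r\|\tilde M - M^*\|_F^2/(T\nu)$.

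The second ingredient bounds $\|\tilde M - M^*\|_F$ by $\|\hat M - M^*\|_F$. Since $\tilde M$ is the best rank-$r$ approximation of $\hat M$ and $M^*$ itself has rank $r$, optimality gives $\|\tilde M - \hat M\|_F \leq \|M^* - \hat M\|_F$, and then the triangle inequality yields
\begin{equation*}
\|\tilde M - M^*\|_F \leq \|\tilde M - \hat M\|_F + \|\hat M - M^*\|_F \leq 2\|\hat M - M^*\|_F.
\end{equation*}
Substituting this into the previous display produces $\sin^2\theta(B,\tilde B) \leq 4r\|\hat M - M^*\|_F^2/(T\nu)$, which is exactly the claimed bound.

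The main obstacle is getting the Davis--Kahan step to produce precisely the stated constant and the operator-norm-to-Frobenius relaxation in the right place; one must be careful that the relevant "gap" here is $\lambda_r(M^*) - \lambda_{r+1}(M^*) = \lambda_r(M^*)$ (since $M^*$ has exactly rank $r$, so $\lambda_{r+1}(M^*)=0$), and that the perturbation $\tilde M - M^*$ need not be low rank even though both $\tilde M$ and $M^*$ are. I expect the cleanest route is to invoke the form of the subspace perturbation bound already used by \citet{tripuraneni2021provable} with $\tilde M$ in place of $\hat M$, so that the only genuinely new work is the low-rank projection estimate $\|\tilde M - M^*\|_F \leq 2\|\hat M - M^*\|_F$; this is the "additional error from the low-rank projection step" alluded to in the statement, and it is precisely what turns a factor of $r/(T\nu)$ into $4r/(T\nu)$.
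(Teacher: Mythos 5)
Your proposal is correct and follows essentially the same route as the paper: the paper also combines the optimality of the rank-$r$ projection ($\|\hat M-\tilde M\|_F\le\|\hat M-M^*\|_F$, hence $\|\tilde M-M^*\|_F\le 2\|\hat M-M^*\|_F$ by the triangle inequality) with a direct application of Lemma~16 of \citet{tripuraneni2021provable}, which is exactly the subspace perturbation bound $\sin^2\theta(B,\tilde B)\le \|\tilde M-M^*\|_F^2/\lambda_r(\alpha\alpha^\top)$ that you identify as the cleanest path. Your rewriting $\lambda_r(\alpha\alpha^\top)=\lambda_r(M^*M^{*\top})=T\nu/r$ is also the same bookkeeping the paper relies on.
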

The variable $\nu$ in \cref{coro:angle} is introduced for clarity. Note that if the tasks are \textit{rich}\footnote{For example if $\|M^*\|_F^2 = \Omega(T)$ and $\frac{\lambda_1(M^* M^{*\,T})}{\lambda_r(M^* M^{*\,T})}=\bigO{1}$}, then $\nu = \Omega(1)$. \cref{coro:angle} states that the $r$-dimensional feature subspace is well estimated if the parameter matrix is well estimated. From there, Theorem~4 by \citet{tripuraneni2021provable} allows to bound the error of the estimation of the parameter vector for the $T+1$-th task. \cref{coro:metabound} below is stated in the balanced case, where the new task also has $m$ samples. It can easily be extended to the unbalanced case.
\begin{coro}\label{coro:metabound}
If \cref{eq:main} holds and $m = \Omega(r\log(r))$, then for $\tilde{B}$ and $\tilde{\alpha}_{T+1}$ defined above, with probabilty at least $1-\frac{c}{m^{100}}$:
\begin{equation*}
\|\tilde{B}\tilde{\alpha}_{T+1} - M^{*\,(T+1)}\|_2^2 \leq c'C\sigma^2 \frac{r^2}{m\nu}\pr{\frac{d^2}{mT}+\log(m)} + c'C^2\frac{r^2d}{m^2\nu}\pr{\frac{d}{T}+1},
\end{equation*}
where $c$ and $c'$ are universal constants.
\end{coro}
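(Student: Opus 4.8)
The plan is to chain three ingredients: the subspace-recovery bound of \cref{coro:angle}, the estimation bound \cref{eq:main}, and the single-task transfer bound of \citet[Theorem~4]{tripuraneni2021provable}. Throughout I would condition on the event that \cref{eq:main} holds. The first step is to control the principal angle: plugging \cref{eq:main} into \cref{coro:angle} and squaring the Frobenius-norm bound (via $(a+b)^2 \le 2a^2 + 2b^2$) gives
\begin{equation*}
\sin^2\theta\pr{B,\tilde B} \;\leq\; \frac{4r\,\|\hat M - M^*\|_F^2}{T\nu} \;=\; \bigO{\frac{r^2}{\nu}\pr{\frac{\sigma^2}{m}\pr{\frac{d^2}{mT}+1} + \frac{Cd}{m^2}\pr{\frac dT + 1}\ln\pr{\frac{dT}{m}}}}.
\end{equation*}

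The second step transfers this to the new task. On the $(T+1)$-st task the least squares fits the $r$-dimensional features $\tilde B^\top x_i^{T+1}$; writing $M^{*\,(T+1)} = \tilde B\tilde B^\top M^{*\,(T+1)} + v$ with $v = (I_d - \tilde B\tilde B^\top)M^{*\,(T+1)}$, the fact that $M^{*\,(T+1)}$ lies in the column space of $B$ together with \cref{ass:diversity} yields $\|v\|^2 \le C\sin^2\theta(B,\tilde B)$. The scalar $\langle v, x_i^{T+1}\rangle$ acts as an extra centered $\|v\|$-sub-Gaussian term on top of $\varepsilon_i^{T+1}$, so the least squares effectively faces a regression with noise level $\sigma^2 + \|v\|^2$ and an irreducible approximation error of order $\|v\|$. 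Since $m = \Omega(r\log r)$, the $r\times r$ empirical second moment of the projected features concentrates around $I_r$ and is well conditioned with probability at least $1 - c/m^{100}$; on that event \citet[Theorem~4]{tripuraneni2021provable} provides a bound of the form
\begin{equation*}
\|\tilde B\tilde\alpha_{T+1} - M^{*\,(T+1)}\|_2^2 \;=\; \bigO{\pr{\sigma^2 + \|v\|^2}\frac{r + \log m}{m} + \|v\|^2} \;=\; \bigO{\sigma^2\frac{r\log m}{m} + C\sin^2\theta\pr{B,\tilde B}},
\end{equation*}
using $\|v\|^2 \le C\sin^2\theta$ and $(r+\log m)/m = \bigO{1}$.

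Substituting the angle bound from the first step, distributing the $C$ factor, and using $\nu = r\lambda_r\pr{\tfrac{M^*M^{*\top}}{T}} \le Cr$ (so the isolated $\sigma^2 r\log m/m$ is dominated by $C\sigma^2 r^2\log m/(m\nu)$) together with $1 \le \log m$ and the harmless extra factor $\ln(dT/m)$, I would recover the stated inequality, with the failure probability $c/m^{100}$ inherited from the second step (conditionally on \cref{eq:main}).

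The hard part will be the transfer step: one must verify that subspace misalignment enters the single-task least-squares analysis only through a deterministic bias floor $\|v\|^2$ and an inflation of the sub-Gaussian noise variance from $\sigma^2$ to $\sigma^2 + \|v\|^2$, so that \citet[Theorem~4]{tripuraneni2021provable} can be invoked verbatim on this misspecified-but-sub-Gaussian regression. Relatedly, the hypothesis $m = \Omega(r\log r)$ is precisely what makes the $r\times r$ empirical covariance of the projected features invertible and well conditioned with the claimed probability — the only point where $m$ cannot be taken arbitrarily small, and the reason the meta-learning guarantee, unlike \cref{thm:estimation1}, needs a mild lower bound on $m$.
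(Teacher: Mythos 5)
Your proposal follows the paper's own route exactly: condition on \cref{eq:main}, bound $\sin^2\theta(B,\tilde B)$ by plugging \cref{eq:main} into \cref{coro:angle}, and then invoke Theorem~4 of \citet{tripuraneni2021provable} on the fresh $(T+1)$-st task, with the standalone $\sigma^2 r\log(m)/m$ term correctly absorbed via $\nu \le Cr$ (a consequence of \cref{ass:diversity}). The only mismatch is the $\ln(dT/m)$ factor your derivation carries on the second term, which the corollary's statement silently drops --- an inconsistency inherited from the paper's statement rather than a gap in your argument.
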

The squared estimation error on a new task is roughly bounded by $\frac{r^2d}{m^2}$. In contrast, it is known that a (single task) linear regression on this task would lead to an estimation error of order $\frac{d}{m}$ \citep{hsu2012random}, \ie leveraging the low rank structure of the parameters and the past observations leads at least to an improvement $\frac{r^2}{m}$ for the trace norm regularized estimator. 

\medskip

As a comparison, the best known bounds for Burer-Monteiro factorization \citep{thekumparampil2021sample} and Method of Moments \citep{tripuraneni2021provable} on a new task are of order $\sigma^2\frac{r}{m}$ for a large number of tasks, thus being comparable to the oracle baseline. These approaches yield better meta-learning bounds, but require respectively $m=\Omega(\log(T))$ or a spherically symmetric feature distribution.

The discrepancy between these bounds is due to different analyses. Our meta-learning bound directly derives from the multi-task bound using \cref{coro:angle}, while the tight analyses of Burer-Monteiro (with alternate minimization) and Method of Moments directly bound the principal angle between the estimated subspaces.
\cref{coro:angle} indeed considers the unrealistic worst case, where the entirety of the estimation error of the matrix $M^*$ is due to the error in subspace estimation. 
%
As a consequence, \cref{coro:angle} leads to a loose bound on the subspace angle of $\bigO{\frac{r^2d}{m^2}}$, while this angle clearly goes to $0$ when the number of tasks grows to infinity in the simulations of \cref{sec:simulations}.
Showing it actually converges to $0$ for a large number of tasks is left for future work and should be done by directly bounding the subspace angle. Such a result would then lead to an optimal meta-learning bound for trace norm based methods, without any further requirement on the number of observations per task or on the feature distribution. 

\section{Experiments}\label{sec:simulations}

This section compares empirically different multi-task methods on
synthetic datasets and discusses the practical aspects of their
implementation. 

\subsection{Simulations}

With the exception of \cref{fig:ntasks-adversarial}, our experimental setup follows that of \citet{tripuraneni2021provable,thekumparampil2021statistically}. More specifically, we set $d=100,r=5$ and sample $x_i^t\simiid\cN(0,I_d)$, $\eps_i^t\simiid\cN(0,\sigma^2)$.
The following experiments study how the normalized Frobenius distance $\norm{\hat{M}-M^*}_F/\sqrt{T}$ and the angle between the subspaces $\sin \theta\pr{B,\tilde{B}}$ behave when varying the number of tasks $T$, the size of each task $m$ and the label noise level $\sigma$. In all experiments the markers show the average and the shaded area shows the standard deviation over 12 independent runs. The code is available in \url{github.com/MichaelKonobeev/meta} and additional experimental details are given in \cref{app:simulations}.

\medskip

In this section, \texttt{altmin} corresponds to the alternating minimization algorithm \citep{thekumparampil2021sample}; 
\texttt{bm} and \texttt{mom} respectively correspond to Burer-Monteiro factorization and the Method of Moments \citep{tripuraneni2021provable}; 
\texttt{nuc} corresponds to the nuclear norm regularized estimator.  
Additionally, we implement two other baselines labeled \texttt{single}, which implements independent least squares regression of the $d$-dimensional parameter vectors (the columns of $M$) for each task;
and \texttt{oracle} which knows the ground-truth subspace $B$ and only estimates the matrix $\alpha$ by performing independent least squares regression for each task in the projected $r$-dimensional space. %
Some algorithms perform very poorly for certain values of $T,m,\sigma$ in Frobenius distance.
In such cases, we omit displaying these points in the figures for the sake of readability.

\begin{figure}[ht]
\centering
\includegraphics[width=0.85\linewidth,trim=2cm 0cm 0cm 0cm, clip]{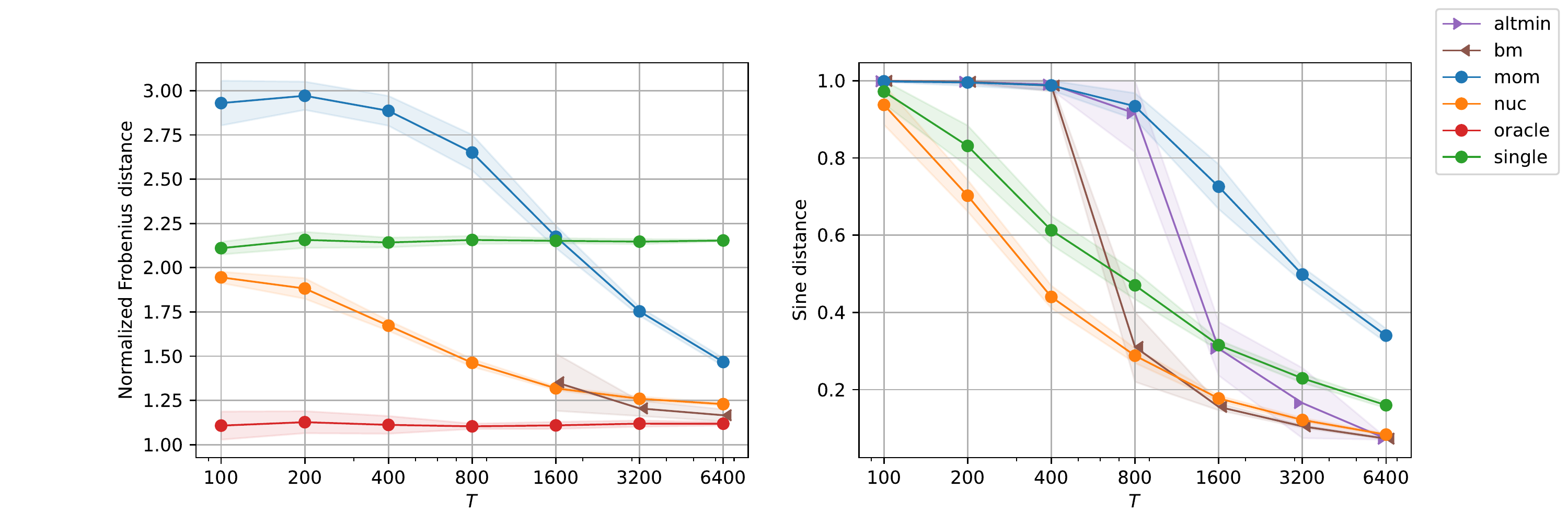}
\caption{Evolution of the estimation error with the number of tasks $T$ for $m=10$.}
\label{fig:ntasks-medium}
\end{figure}
\begin{figure}[ht]
\centering
\includegraphics[width=0.85\linewidth,trim=2cm 0cm 0cm 0cm, clip]{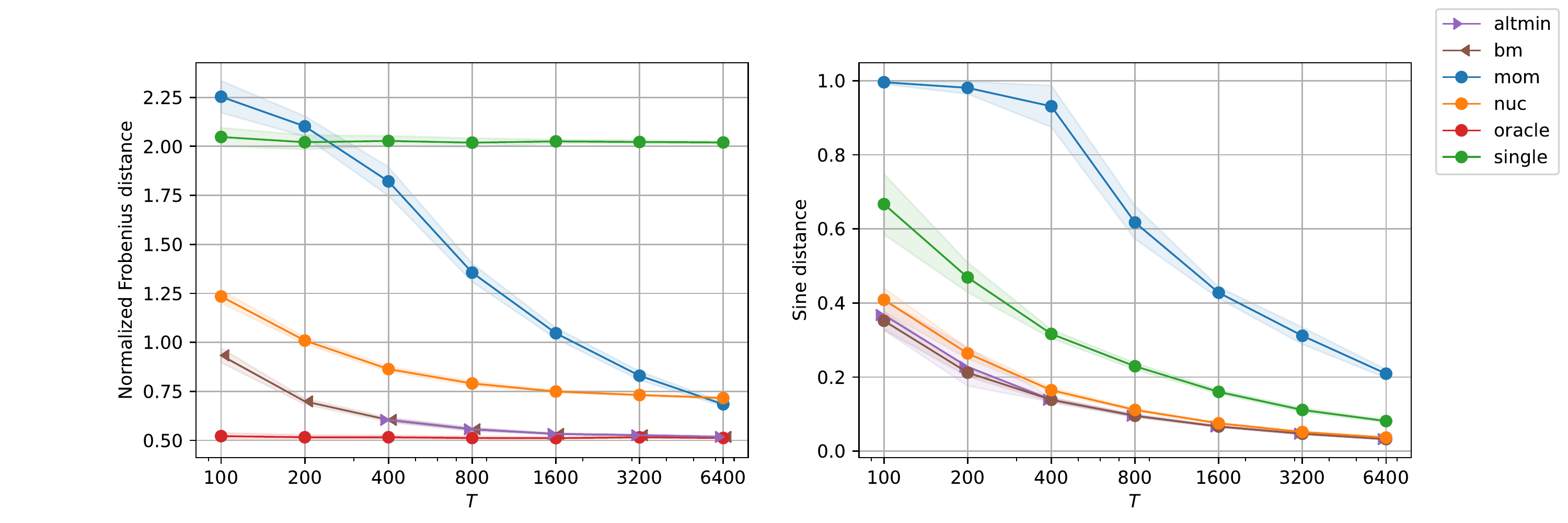}
\caption{Evolution of the estimation error with the number of tasks $T$ for $m=25$.}
\label{fig:ntasks-large}
\end{figure}
\cref{fig:ntasks-medium} displays the normalized Frobenius distance and the angle between the subspaces when varying the number of tasks $T$ for a small number of observations per task ($m=10$). 
We observe that \texttt{nuc} outperforms the other algorithms in both distances. When the number of tasks becomes large, \texttt{altmin} and \texttt{bm} nevertheless become comparable in sine distance (even in Frobenius distance for \texttt{bm}). 
%
%

On the other hand, \texttt{altmin} and \texttt{bm} outperform \texttt{nuc} with larger $m$ ($m=25$) as shown in \cref{fig:ntasks-large}. The regime of interest for \texttt{nuc} thus seems to be for small values of $m$, as can be seen in \cref{fig:task-size} below. These empirical results confirm the different known theoretical bounds.
\begin{figure}[ht]
\centering
\includegraphics[width=0.85\linewidth,trim=2cm 0cm 0cm 0cm, clip]{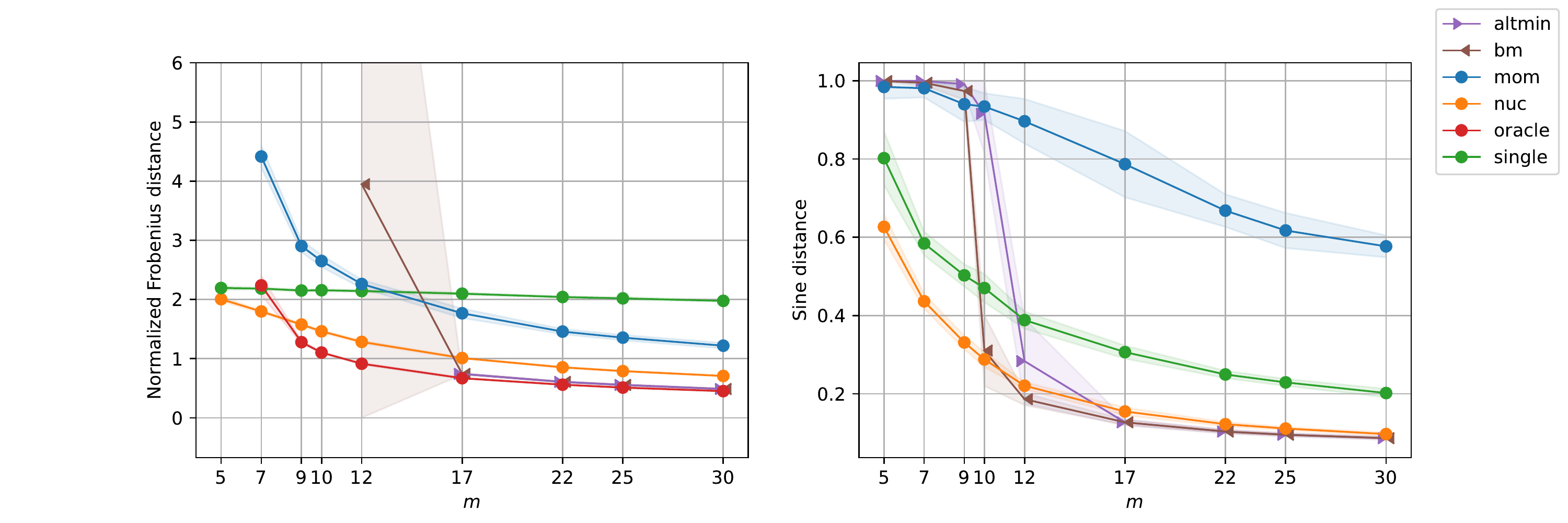}
\caption{Evolution of the estimation error with the task size $m$ for $T=800$.}
\label{fig:task-size}
\end{figure}

\medskip

Method of Moments is largely outperformed by the other algorithms, unless the number of tasks is very large.
Furthermore, as highlighted in \cref{fig:ntasks-adversarial} below, it might fail for non-spherically symmetric distributions\footnote{Further details on the chosen feature and parameters distribution are given in \cref{app:simulations}.}.
In particular, the largest principal angle between its estimated subspace and the real one remains equal to $\frac{\pi}{2}$. Method of Moments yet outperforms \texttt{single} in Frobenius distance, because it still manages to learn some of the directions of the $r$-dimensional subspace.

\begin{figure}[ht]
\centering
\includegraphics[width=0.85\linewidth,trim=2cm 0cm 0cm 0cm, clip]{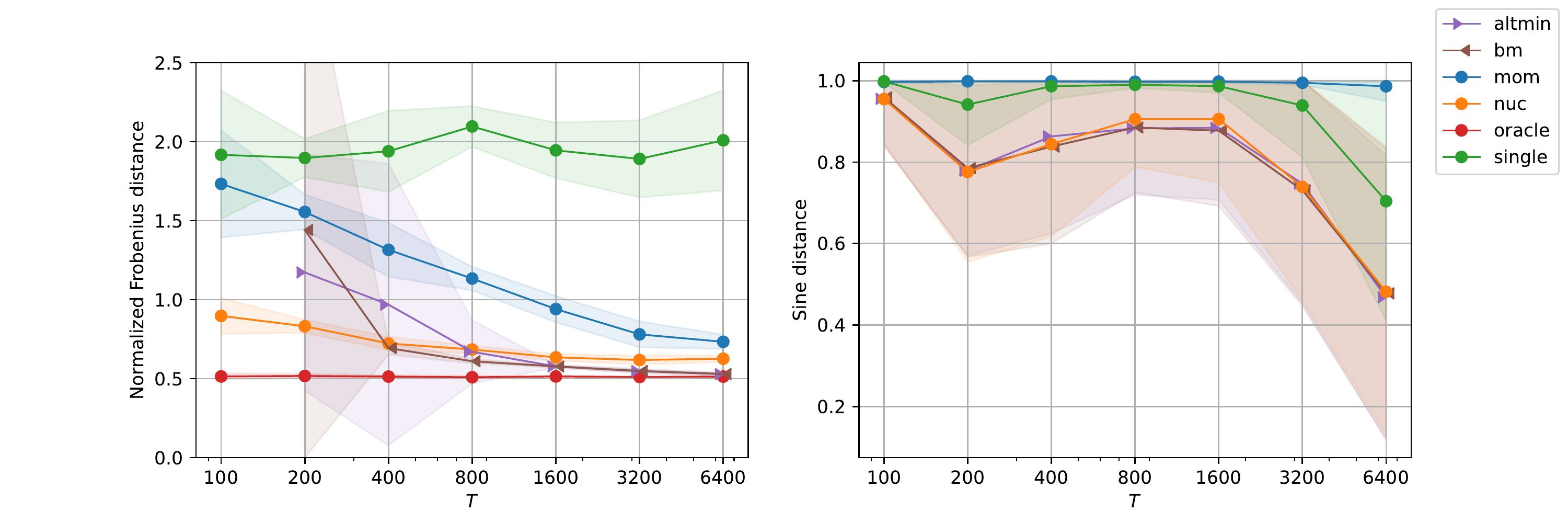}
\caption{Evolution of the estimation error with the number of tasks $T$ for $m=25$ and a non-spherically symmetric feature distribution.}
\label{fig:ntasks-adversarial}
\end{figure}

Finally, \cref{fig:label-scale} studies the evolution of the estimation error with the level noise $\sigma$. Algorithms \texttt{altmin} and \texttt{bm} do not perform well in Frobenius norm here and are not displayed for a clearer figure.
Although \texttt{nuc} scales better with $\sigma$ than \texttt{mom}, both methods seem to not recover the exact parameters matrix in the noiseless setting, confirming the $(1+\sigma)$ dependence in \cref{thm:informal} for the former.

\begin{figure}[ht]
\centering
\includegraphics[width=0.85\linewidth,trim=2cm 0cm 0cm 0cm, clip]{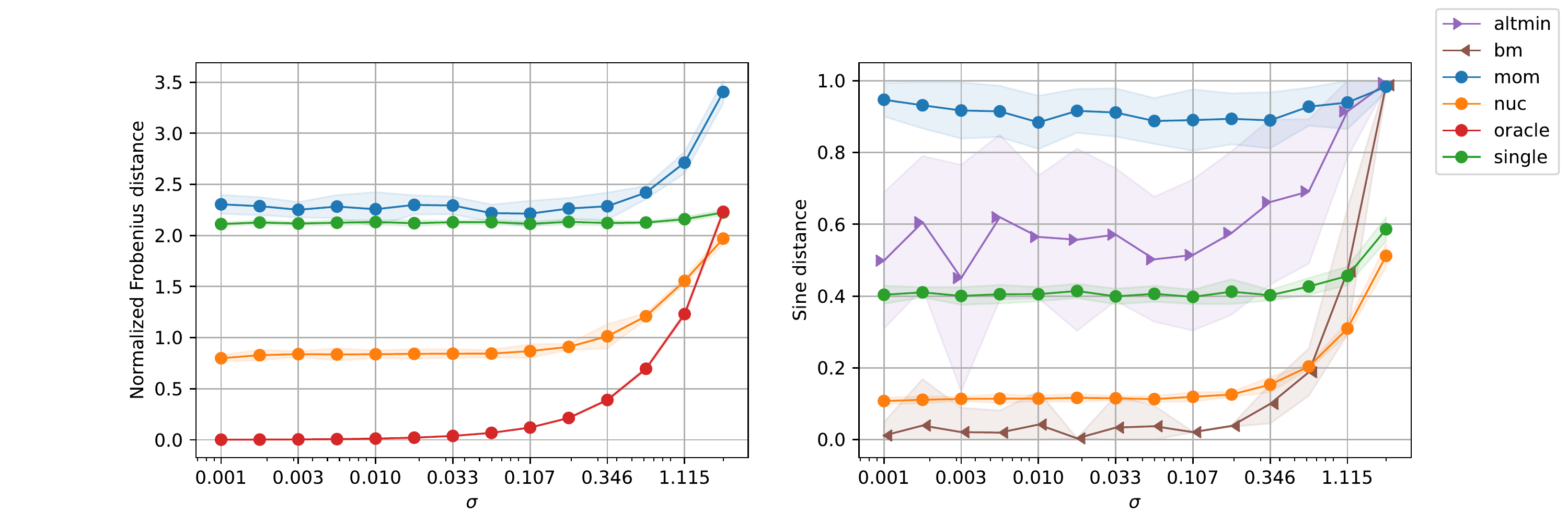}
\caption{Evolution of the estimation error with the noise level $\sigma$ for $T=800$ and $m=10$.}
\label{fig:label-scale}
\end{figure}
\medskip

As explained in \cref{sec:meta}, \texttt{nuc} is comparable to \texttt{altmin} in subspace estimation in the experiments. This suggests that the meta-learning bound of \cref{coro:metabound} is not tight and it needs further investigation.

\subsection{Numerical complexity} \label{sec:complexity}

A local minimum of the Burer-Monteiro factorization can be approximated up to $\eps$ using first order methods such as gradient descent \citep{jin2017escape} in $\bigO{\frac{1}{\eps^2}}$ steps. Computing the gradient of the objective function here requires at each step a multiplication between $d\times r$ and $r\times T$ matrices, leading to a total complexity of $\bigO{\frac{rdT}{\eps^2}}$.

\medskip

For trace norm regularization, algorithms approximating a solution of \cref{eq:nuc-opt-prob} up to $\eps$ in $\bigO{\frac{1}{\sqrt{\eps}}}$ steps exist \citep{ji2009accelerated,toh2010accelerated}, but  require an SVD computation (of complexity $dT^2$) at each step, thus leading to a considerable numerical complexity for a large number of tasks. Instead, \citet{jaggi2010simple} propose an algorithm approximating the solution of the equivalent constrained problem with a numerical complexity $\bigO{\frac{dT}{\eps^2}}$.
%
In practice, it might thus be preferable to consider the constrained problem, as it is cheaper in  computation and the considered optimization programs are equivalent for properly tuned regularization/constraint parameters.

\medskip

On the other hand, Method of Moments is much better in terms of complexity as it only computes a single truncated SVD and then proceeds to $T$ linear regressions with $r$ parameters. In total, its complexity is thus of order $\bigO{rT (d+m^2)}$, largely outperforming the other methods in terms of numerical complexity.
The Method of Moments is thus computationally cheaper than both Burer-Monteiro factorization and trace norm constrained minimization, which end up being similar in computational cost. 

\paragraph{Parameter tuning.}
Another important practical aspect of these algorithms is parameter tuning.
Although trace norm methods require tuning the regularization parameter, \cref{thm:estimation1} needs this parameter to depend on known variables except from the noise level $\sigma$. On the other hand, Burer-Monteiro factorization and Method of Moments both require the knowledge of the rank of the parameters matrix, which is unknown. The parameter tuning is thus easier in practice for trace norm based methods, being one of their main practical advantages in multi-task learning.

\section{Conclusion}

This work proposes the first multi-task estimation error bound when the number of samples per task is very limited. It illustrates the interest of nuclear norm regularization for low rank matrix estimation in this intricate regime and confirms empirically its good performance on synthetic datasets, with respect to the other known methods.
It confirms that learning shared representation is possible with a limited number of samples per task, in the sample linear representation model, grasping insights of the empirical success on learning non-linear representation.

In light of the experiments, the proposed bounds might be improvable, especially in the subspace estimation. Such an improvement is left open for future work and would require more refined analytical tools.   

\acks{We warmly thank Nilesh Tripuraneni and Alexandre Tsybakov for their time and precious feedback.}

\pagebreak
\bibliography{ref.bib}

\pagebreak
\appendix
\section{Multi-task analysis of Method of Moments}\label{sec:MoM}

This section provides a multi-task analysis of the Method of Moments (MoM) algorithm introduced in \citep{tripuraneni2021provable}. \citet{tripuraneni2021provable} only provided a meta-learning type bound (similar to \cref{coro:metabound}) for this algorithm, in the particular case of a Gaussian feature distribution. 
This section adapts this result to derive a multi-task bound for \cref{algo:MoM}, with any spherically symmetric feature distribution\footnote{A distribution is spherically symmetric if it is invariant under any orthogonal transformation.}.

\begin{algorithm2e}[htp]
\caption{Method of Moments}
\label{algo:MoM}
\DontPrintSemicolon 
\KwIn{$(x_i^t, y_i^t)_{i,t} \in \pr{\R^d \times\R}^{mT} $}
\vspace{1em}
$\hat{B_1}\hat{D}_1\hat{B}_1\tp \gets $ top $r$-SVD of $\sum_{i=1}^m\sum_{t=\lceil \frac{T}{2} \rceil+1}^{T} y_i^t x_i^t (x_i^t)\tp$ \tcp*{$\hat{B}_1 \in \R^{d\times r}$}
$\hat{\alpha}_1 \gets \argmin_{\alpha \in \R^{r\times \lceil \frac{T}{2} \rceil}} \sum_{i=1}^m\sum_{t=1}^{\lceil \frac{T}{2} \rceil} \pr{y_i^t-\langle \hat{B}_1\alpha^{(t)}, x_i^t\rangle}^2$\;
\vspace{1em}
$\hat{B_2}\hat{D}_2\hat{B}_2\tp \gets $ top $r$-SVD of $\sum_{i=1}^m\sum_{t=1}^{\lceil \frac{T}{2} \rceil} y_i^t x_i^t (x_i^t)\tp$ \;
$\hat{\alpha}_2 \gets \argmin_{\alpha \in \R^{r\times T-\lceil \frac{T}{2} \rceil}} \sum_{i=1}^m\sum_{t=\lceil \frac{T}{2} \rceil+1}^{T} \pr{y_i^t-\langle \hat{B}_2\alpha^{(t)}, x_i^t\rangle}^2$\;
\vspace{1em}
\Return{$\large( \hat{B}_1 \hat{\alpha}_1, \hat{B}_2 \hat{\alpha_2} \large)$}\;
\end{algorithm2e}

This algorithm directly estimates the low dimensional subspace by aggregating all the observations (for different tasks) together. This estimation relies on the fourth moments of the features, hence its name.
Note that the considered MoM algorithm is slightly modified with respect to the original one: we split the tasks in two batches and estimate a batch parameters using the estimated subspace on the other batch. This trick is used to get independence between the estimated subspace and the estimated parameters, which allows to apply Theorem~4 by \citet{tripuraneni2021provable}. This result could not be directly used without splitting the data.

\medskip

\cref{thm:MoM} bounds the estimation error of MoM. Its proof is deferred to \cref{sec:MoMproof}.
\begin{thm}\label{thm:MoM}
Consider the setting defined in \cref{sec:model}. Denote $M_1^* \in \R^{d\times \lceil \frac{T}{2} \rceil}$ the matrix corresponding to the first $\lceil \frac{T}{2} \rceil$ columns of $M^*$ and $M_2^*$ the matrix corresponding to the remaining columns. 
Additionally, if the feature distribution is spherically symmetric and  $m \geq c r \log(r)$ and $mT \geq c \frac{\log^6(dmT)\pr{\sigma^4+C^2}r^2}{\kappa^2 \nu^2}d$, then with probability at least $1-\frac{2}{m^2T^2}$, the estimator returned by \cref{algo:MoM} verifies:
\begin{equation}
    \|\hat{M} - M^* \|_F^2 \leq c' \sigma^2\frac{r}{m}T\log(mT) + c'C\log^6(dmT)r^2\frac{\sigma^4+C^2}{\kappa^2 \bar{\nu}^2}\frac{d}{m},
\end{equation}
where $\bar{\nu}=r\min\pr{\lambda_r\pr{\frac{M_1^* M_1^{*\, \top}}{T}}, \lambda_r\pr{\frac{M_2^* M_2^{*\, \top}}{T}}}$, $\kappa = \E[\langle e_1, x\rangle^4] - {\E[\langle e_1, x\rangle^2\langle e_2, x\rangle^2]>0}$, $c$ and $c'$ are universal positive constants.
\end{thm}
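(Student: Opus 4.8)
The plan is to reduce the analysis of \cref{algo:MoM} to two independent applications of existing results: first, an error bound on the estimated subspace $\hat B_j$ using the fourth-moment estimator, and second, a per-task least-squares bound (Theorem~4 by \citet{tripuraneni2021provable}) for the parameters $\hat\alpha_j$ given a fixed, accurate subspace. The data-splitting trick is precisely what makes the second step legitimate: since $\hat B_1$ is computed from the second batch of tasks and $\hat\alpha_1$ is fit on the first batch, conditionally on $\hat B_1$ the samples used for $\hat\alpha_1$ are fresh, so the least-squares error decomposes cleanly. I would carry out the argument for batch $1$ (bounding $\|\hat B_1\hat\alpha_1 - M_1^*\|_F$) and batch $2$ symmetrically, then add the squared errors.

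First I would analyze the subspace estimator. The matrix $\hat S := \sum_{i,t} y_i^t x_i^t(x_i^t)^\top$ has expectation proportional to a matrix whose top-$r$ eigenspace is exactly $\mathrm{col}(B)$; this is where spherical symmetry enters, since for such distributions $\E[\langle v,x\rangle^2 xx^\top]$ is a linear combination of $vv^\top$ and $I_d$ with the coefficient of $vv^\top$ equal to $\kappa>0$ (hence the appearance of $\kappa$ in the denominator). So $\E[\hat S] = \kappa\sum_t M^{*(t)}M^{*(t)\top} + (\text{const})\,I_d$ up to the relevant batch, and a Davis–Kahan / Wedin $\sin\theta$ bound gives $\sin\theta(\hat B_1, B) \lesssim \|\hat S - \E[\hat S]\|_{\mathrm{op}} / (\kappa\,\lambda_r(\sum_t M^{*(t)}M^{*(t)\top}))$. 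The fluctuation $\|\hat S - \E\hat S\|_{\mathrm{op}}$ is the heavy-tailed matrix concentration quantity: each summand $y_i^t x_i^t(x_i^t)^\top$ has entries with only polynomially many finite moments (fourth powers of sub-Gaussians, times the label), so I would invoke a truncated/Bernstein-type matrix concentration bound, which is exactly where the $\log^6(dmT)$ factor and the $(\sigma^4+C^2)$ scaling come from, and where the sample condition $mT \gtrsim \log^6(dmT)(\sigma^4+C^2)r^2 d/(\kappa^2\nu^2)$ is needed to make $\sin\theta$ small. This yields $\sin^2\theta(\hat B_j,B) \lesssim C\log^6(dmT) r^2 (\sigma^4+C^2) d / (\kappa^2\bar\nu^2 mT)$ for each $j$, absorbing $\nu$ into $\bar\nu$.

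Second I would translate the subspace error into a parameter error. For each task $t$ in batch $1$, write $M^{*(t)} = B\alpha^{*(t)}$ and decompose $\hat B_1\hat\alpha_1^{(t)} - M^{*(t)}$ into a ``subspace misalignment'' term, controlled by $\sin\theta(\hat B_1,B)\cdot\|\alpha^{*(t)}\| \le \sin\theta\cdot\sqrt C$ via \cref{ass:diversity}, plus an ``in-subspace estimation noise'' term, which by Theorem~4 of \citet{tripuraneni2021provable} (applicable because $m = \Omega(r\log r)$ ensures the $r\times r$ design Gram matrix restricted to $\hat B_1$ is well-conditioned with high probability) is of order $\sigma^2 r/m$ per task plus a bias of order $C\cdot\sin^2\theta\cdot d/m$ coming from the component of $M^{*(t)}$ orthogonal to $\hat B_1$. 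Summing over the $\lceil T/2\rceil$ tasks in the batch: the noise contributes $\sigma^2 rT\log(mT)/m$ (the $\log(mT)$ from a union bound over tasks to get the high-probability statement), the misalignment contributes $T\sin^2\theta(\hat B_1,B)\cdot C \lesssim C^2\log^6(dmT)r^2(\sigma^4+C^2)d/(\kappa^2\bar\nu^2 m)$, and likewise for batch $2$. Adding the two batches and collecting constants gives \cref{thm:MoM}, with the failure probability $2/(m^2T^2)$ coming from the union of the two batch-wise bad events (one per batch, each of probability $1/(m^2T^2)$, where the exponents are chosen to match the $\log(mT)$ slack).

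The main obstacle is the heavy-tailed matrix concentration bound on $\|\hat S - \E\hat S\|_{\mathrm{op}}$: the summands are products of three sub-Gaussian-ish factors (two copies of $x_i^t$ and one label, itself a sum of $\langle M^{*(t)},x_i^t\rangle$ and noise), so they are only sub-Weibull of a small exponent and matrix Bernstein does not apply directly. Handling this requires a truncation argument — split each summand into a bounded part (where one applies matrix Bernstein) and a tail part (controlled in expectation and by a union bound) — with the truncation level tuned so that the tail is negligible, which is what forces the polylogarithmic $\log^6(dmT)$ overhead. A secondary subtlety is keeping the two estimation stages genuinely independent so that Theorem~4 of \citet{tripuraneni2021provable} applies verbatim; the task split handles this, but one must be careful that the least-squares problem for $\hat\alpha_1$ conditions only on $\hat B_1$ and not on any quantity depending on the batch-$1$ samples.
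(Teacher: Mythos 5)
Your proposal follows essentially the same route as the paper: concentration of the empirical fourth-moment matrix $\frac{1}{m|S|}\sum (y_i^t)^2 x_i^t (x_i^t)^\top$ around its expectation, the computation showing that for spherically symmetric features this expectation equals $\kappa\, uu^\top$ plus a multiple of $I_d$ (whence the eigengap $\kappa\bar\nu/r$), a Davis--Kahan bound on $\sin\theta(\hat B_j,B)$, and then Theorem~4 of \citet{tripuraneni2021provable} applied per batch, with the task split guaranteeing independence between $\hat B_j$ and the samples used for $\hat\alpha_j$. The only cosmetic differences are that the paper's concentration lemma carries a $\log^3(dmT)(\sigma^2+C)$ factor that becomes $\log^6(dmT)(\sigma^4+C^2)$ only after squaring the angle bound, and that the paper additionally verifies $\kappa>0$ via Cauchy--Schwarz and rotation invariance, which you assert but do not prove.
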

Similarly to the term $\nu$ in \cref{sec:meta}, $\bar{\nu} = \Omega(1)$ for rich tasks. For large $T$, the first term then prevails and \cref{thm:MoM} recovers (up to a logarithmic term) the estimation error of a linear regression on an $r$-dimensional subspace. The method of moments algorithm however requires the feature distribution to be spherically symmetric and might fail otherwise as illustrated by \cref{fig:ntasks-adversarial} in \cref{sec:simulations}.

Moreover, the method of moments is significantly worse than Burer-Monteiro factorization and trace norm regularization in practice, hence the need for a better understanding of these methods.

\section{Experimental details}\label{app:simulations}

In \cref{fig:ntasks-medium,fig:ntasks-large,fig:task-size,fig:ntasks-adversarial}, we choose $\sigma=1$.
Except for \cref{fig:ntasks-adversarial}, the ground-truth matrix $M^*$ is generated as follows. We first take $B$ as the top $r$ left singular vectors of a $d\times d$ random matrix whose entries are i.i.d. and drawn from $\cN(0,1)$. Next, we sample $\alpha\in\R^{r\times T}$ with $\alpha_{i,j}\simiid\cN(0,1)$ and compute the resulting matrix $M^*=B\alpha$. 

\medskip

In all experiments except the one in \cref{fig:label-scale} the value of the regularization coefficient for \texttt{nuc} is chosen as $\lambda=\frac{\sigma}{\sqrt{T}}\sqrt{\frac{T+\nicefrac{d^2}{m}}{mT}}$. 
In the experiment with varying $\sigma$ in \cref{fig:label-scale}, we found that for small values of $\sigma$ the regularization coefficient computed using the above formula becomes too small. We thus use validation sets of size $0.2m$ for each task to find the value of the regularization coefficient via grid search.

\medskip

For the setting in \cref{fig:ntasks-adversarial} of \cref{sec:simulations}, the feature distribution samples independently each coordinate. The $k$-th coordinate is given by $\langle x_i^t, e_k\rangle=\cos(\frac{k}{d}\frac{\pi}{2})\xi_{i,k}^t + \sin(\frac{k}{d}\frac{\pi}{2})\eta_{i,k}^t$, where $\xi_{i,k}^t\simiid\cU[-\sqrt{3},\sqrt{3}]$ and $\eta_{i,k}^t\simiid\cN(0,1)$.
The $r$-dimensional parameters are generated by first sampling $r\times r$ matrix $\alpha'$ with each element $(\alpha')_{i,j}\simiid\cN(0,1)$ and next completing this to a $T\times r$ matrix by sampling $T-r$ vectors uniformly at random among the set of columns of the matrix $\alpha'$.

This choice of feature distribution is to ensure different fourth moments of the features along different directions, while this choice of parameters avoids that the average parameters matrix is too well behaved.

\section{Proofs}

This section provides all the proofs deferred from the main text. 

\subsection{Proof of \cref{lemma:rsc}}\label{sec:rscproof}

In the whole section, we assume $m = \bigO{d}$. Adapting \cref{lemma:operatornorm} below, we can actually show if $m\gtrsim d$ that with probability at least $1-2T\exp\pr{-d}$:
\begin{equation*}
\frac{c}{\sqrt{T}}\leq \min_{\Delta \in \R^{d\times T}}\frac{\| \cL(\Delta) \|_F}{\|\Delta\|_F} \leq \max_{\Delta \in \R^{d\times T}}\frac{\| \cL(\Delta) \|_F}{\|\Delta\|_F} \leq  \frac{c'}{\sqrt{T}},
\end{equation*}
thus leading to a restricted isometry condition that also implies \cref{lemma:rsc}.

\medskip

The proof first bounds the operator norm of $\mathcal{L}$.
\begin{lemm}\label{lemma:operatornorm}
With probability at least $1-2T\exp\pr{-d}$:
\begin{equation*}
\max_{\Delta \in \R^{d\times T}}\frac{\| \cL(\Delta) \|_F}{\|\Delta\|_F} \leq  c\frac{\sqrt{d} + \sqrt{m}}{\sqrt{mT}},
\end{equation*}
for some universal constant $c$.
\end{lemm}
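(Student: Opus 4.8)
The plan is to bound the operator norm of $\cL$ by the largest spectral norm among the $T$ per-task design matrices, and then invoke the standard non-asymptotic bound on the spectral norm of a random matrix with sub-Gaussian isotropic rows. For $t\in[T]$, let $X^t\in\R^{m\times d}$ be the matrix whose $i$-th row is $(x_i^t)\tp$, so that $(X^t\Delta^{(t)})_i = \langle x_i^t, \Delta^{(t)}\rangle$. Then for any $\Delta\in\R^{d\times T}$,
\[
\|\cL(\Delta)\|_F^2 = \frac{1}{mT}\sum_{t=1}^T\|X^t\Delta^{(t)}\|_2^2 \leq \frac{1}{mT}\pr{\max_{t\in[T]}\lambda_1(X^t)^2}\sum_{t=1}^T\|\Delta^{(t)}\|_2^2 = \frac{\max_{t\in[T]}\lambda_1(X^t)^2}{mT}\,\|\Delta\|_F^2 ,
\]
so it is enough to show $\max_{t\in[T]}\lambda_1(X^t) \leq c(\sqrt m+\sqrt d)$ with probability at least $1-2T\exp(-d)$.

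Fix $t$. By \cref{ass:random} the rows of $X^t$ are i.i.d., centered, isotropic and $1$-sub-Gaussian. The classical bound on the spectral norm of such a matrix (Vershynin's deviation inequality for matrices with sub-Gaussian rows) provides a universal constant $C$ such that, for every $s\geq 0$, $\lambda_1(X^t)\leq C(\sqrt m+\sqrt d + s)$ with probability at least $1-2\exp(-s^2)$. Choosing $s=\sqrt d$ gives $\lambda_1(X^t)\leq c(\sqrt m+\sqrt d)$ with probability at least $1-2\exp(-d)$; a union bound over $t\in[T]$ then yields the desired event, and substituting into the display above concludes the proof.

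There is essentially no real obstacle here: the only substantive ingredient is sub-Gaussian random-matrix spectral-norm concentration, which is standard, and the only choice to make is the deviation level $s=\sqrt d$, tuned so that the per-task failure probability $2e^{-d}$ absorbs the union bound over the $T$ tasks. The two-sided statement mentioned in the remark preceding the lemma (valid in the regime $m\gtrsim d$) follows from the same argument combined with the matching lower bound on the smallest singular value of $X^t$, namely $\sqrt m - C(\sqrt d + s)$, which is the other half of the same inequality and is bounded below by a constant multiple of $\sqrt m$ once $m\gtrsim d$.
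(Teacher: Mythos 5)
Your proof is correct and follows the same route as the paper: decompose $\|\cL(\Delta)\|_F^2$ into the per-task quadratic forms $\frac{1}{mT}\sum_t\|X_t\Delta^{(t)}\|_2^2$, bound each $\|X_t\|_2$ by $c(\sqrt{d}+\sqrt{m})$ via Vershynin's spectral-norm bound for matrices with independent sub-Gaussian isotropic rows at deviation level $\sqrt{d}$, and union bound over the $T$ tasks. No discrepancies to report.
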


\begin{proof}
Note that by definition of $\cL$, for any $\Delta\in\R^{d\times T}$:
\begin{equation}
\label{eq:taskdecomposition}
\|\cL(\Delta) \|_2^F = \frac{1}{mT}\sum_{t=1}^T \| X_t \Delta^{(t)} \|_2^2,
\end{equation}
where $\Delta^{(t)}$ is the $t$-th column of $\Delta$ and $X_t = [x_1^t, \ldots, x_m^t]^\top \in \R^{m \times d}$.

By design of the setting, $X_t$ is a matrix with independent, mean zero, sub-gaussian isotropic random vectors in $\R^d$. Theorem~4.6.1 by \citet{vershynin2018high} directly yields for a universal constant $c$ that
\begin{equation*}
\| X_t \|_2 \leq  (1+c)\sqrt{d} + c\sqrt{m} \qquad \text{ with probability at least } 1-2e^{-d}.
\end{equation*}
Taking a union bound over all tasks, with probability at least $1-2Te^{-d}$, $\| X_t \|_2 \leq  (1+c)\sqrt{d} + c\sqrt{m}$ for all tasks $t \in [T]$. This finally leads to \cref{lemma:operatornorm} using \cref{eq:taskdecomposition}.
\end{proof}

We then show an RSC condition on the set of low rank matrices similar to Lemma~11 by \citet{tripuraneni2021provable}.

\begin{lemm}\label{lemma:rscrank}
For any $r' \in \mathbb{N}$, with probability at least $1-\exp\pr{c_1 r'(d+T)}$:
\begin{equation*}
\|\mathcal{L}(\Delta)\|_F^2 \geq 3\frac{\|\Delta\|_F^2}{8T} - c' \frac{r'(d+T)}{mT} \max_{t \in [T]} \|\Delta^{(t)}\|^2 \ln\pr{\frac{dT}{m}} \qquad \text{ uniformly over all matrices of rank at most } r',
\end{equation*}
where $c,c', c_1, c_2$ are positive universal constants.
\end{lemm}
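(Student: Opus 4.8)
The plan is a covering argument, stratified by the ``flatness'' parameter $\max_t\|\Delta^{(t)}\|^2$. Since the claimed inequality is quadratic in $\Delta$, by homogeneity it suffices to treat $\|\Delta\|_F=1$, so that $\max_t\|\Delta^{(t)}\|^2\in[1/T,1]$. Isotropy (\cref{ass:random}) gives $\E\|\cL(\Delta)\|_F^2=\frac1{mT}\sum_{t,i}\E\langle x_i^t,\Delta^{(t)}\rangle^2=\frac1T\|\Delta\|_F^2$, so the task is to show that the empirical form $\|\cL(\Delta)\|_F^2=\frac1{mT}\sum_{t,i}\langle x_i^t,\Delta^{(t)}\rangle^2$ stays within a constant fraction of this mean, up to the allowed additive slack, uniformly over rank-$\le r'$ matrices. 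A uniform restricted isometry is hopeless when $m<r'$: each $X_t\in\R^{m\times d}$ then has a kernel that meets the common $r'$-dimensional column space of $\Delta$, so no per-task lower bound exists and one must exploit concentration of the average over all $mT$ samples.

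For a fixed rank-$\le r'$ matrix $\Delta$, each summand $\langle x_i^t,\Delta^{(t)}\rangle^2$ is the square of a $\|\Delta^{(t)}\|$-sub-Gaussian variable, hence sub-exponential with Orlicz scale $\lesssim\|\Delta^{(t)}\|^2$ and variance $\lesssim\|\Delta^{(t)}\|^4$; since $\sum_t\|\Delta^{(t)}\|^4\le(\max_t\|\Delta^{(t)}\|^2)\|\Delta\|_F^2$, Bernstein's inequality yields the lower tail bound
\[
\Pr\!\Big[\|\cL(\Delta)\|_F^2<\tfrac1T\|\Delta\|_F^2-u\Big]\le 2\exp\!\Big(-c\,mT\min\big(\tfrac{u^2T}{(\max_t\|\Delta^{(t)}\|^2)\|\Delta\|_F^2},\ \tfrac{u}{\max_t\|\Delta^{(t)}\|^2}\big)\Big).
\]
Equivalently, one may truncate $\langle x_i^t,\Delta^{(t)}\rangle^2$ at level $\Theta(\max_t\|\Delta^{(t)}\|^2\ln(dT/m))$ --- the sub-Gaussian tails make the truncation error in expectation negligible --- and use a bounded-variable Bernstein bound; this is the origin of the $\ln(dT/m)$ factor.

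First I would split $\{\mathrm{rank}\le r',\ \|\Delta\|_F=1\}$ into $\bigO{\ln T}$ dyadic strata $\mathcal S_s=\{\max_t\|\Delta^{(t)}\|^2\in[s,2s]\}$ with $s$ a power of $2$ in $[1/T,1]$, and take an $\eps$-net $\mathcal N_s\subseteq\mathcal S_s$; writing $\Delta=AB\tp$ with $A\in\R^{d\times r'}$, $B\in\R^{T\times r'}$ one gets $\ln|\mathcal N_s|\le C\,r'(d{+}T)\ln(1/\eps)$ (standard, cf.\ \citet{vershynin2018high}). For $\Delta,\Delta'$ with $\|\Delta-\Delta'\|_F\le\eps$, on the event of \cref{lemma:operatornorm} one has $|\|\cL(\Delta)\|_F^2-\|\cL(\Delta')\|_F^2|\le\|\cL(\Delta-\Delta')\|_F\|\cL(\Delta+\Delta')\|_F\lesssim\frac{d}{m}\cdot\frac{\eps}{T}$ (using $m=\bigO{d}$), so the choice $\eps\asymp m/d$ makes this at most $\frac1{16T}$ and keeps $\ln(1/\eps)=\bigO{\ln(dT/m)}$. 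Applying the single-matrix bound on each net point with $u$ calibrated so that the Bernstein exponent exceeds $r'(d{+}T)\ln(dT/m)$, and union-bounding over all strata and nets (the $\ln T$ strata are absorbed into $\ln(dT/m)$), yields with probability at least $1-\exp(-c_1r'(d{+}T))$, simultaneously for all $\Delta\in\mathcal S_s$ and their net points,
\[
\|\cL(\Delta)\|_F^2\ \ge\ \tfrac1T\|\Delta\|_F^2-\sqrt{\tfrac{r'(d{+}T)\ln(dT/m)}{mT}\cdot\tfrac{s}{T}\|\Delta\|_F^2}-\tfrac{c'r'(d{+}T)\ln(dT/m)}{mT}\,s-\tfrac1{16T}\|\Delta\|_F^2 .
\]
Bounding the square-root term by AM--GM into $\frac18\cdot\frac1T\|\Delta\|_F^2$ plus a term of the same $\frac{r'(d{+}T)\ln(dT/m)}{mT}\,s$ form, and using $s\le\max_t\|\Delta^{(t)}\|^2$ on $\mathcal S_s$, the mean loses only $(\frac18+\frac1{16})\frac1T\|\Delta\|_F^2$, leaving $\ge\frac{3}{8T}\|\Delta\|_F^2$, which is the claim (with the displayed constant $\tfrac38$).

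The hard part is the interplay between the heavy (sub-exponential, not sub-Gaussian) tails of $\langle x_i^t,\Delta^{(t)}\rangle^2$ and the size of the low-rank cover: one must carry the per-task scales $\|\Delta^{(t)}\|^2$ separately and stratify by $\max_t\|\Delta^{(t)}\|^2$ (without this one only recovers $\|\Delta\|_F^2$ in place of $\max_t\|\Delta^{(t)}\|^2$), calibrate the truncation level and net resolution so the linear Bernstein regime contributes a single logarithmic factor, and accept that no per-task sharpening is available when $m\lesssim r'$ --- precisely what forces the covering approach and produces the $d/m$-type overhead. This delicate bookkeeping is presumably the source of the minor errors in \citet[Lemma~11]{tripuraneni2021provable}.
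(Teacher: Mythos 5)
Your proposal is correct and follows essentially the same route as the paper: an $\eps$-net of the unit-Frobenius rank-$\le r'$ matrices of log-cardinality $O(r'(d+T)\ln(1/\eps))$, a pointwise Bernstein bound for the sub-exponential variables $\langle x_i^t,\Delta^{(t)}\rangle^2$ that tracks $\max_t\|\Delta^{(t)}\|^2$, and the operator-norm bound on $\cL$ to absorb the net-approximation error into a $\frac{1}{16T}$ slack. The only difference is cosmetic: where you stratify dyadically in $\max_t\|\Delta^{(t)}\|^2$ to keep the per-column scale coherent across the net, the paper instead takes $\eps\le 1/\sqrt{T}\le\max_t\|\Delta^{(t)}\|$ and deduces $\max_t\|M^{(t)}\|^2\le 4\max_t\|\Delta^{(t)}\|^2$ directly at each net point.
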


\begin{proof}
By homogeneity, it suffices to show this for any matrix in $\Gamma_{r'}$ where
\begin{equation*}
\Gamma_{r'} = \lbrace M \in \R^{d\times T} \mid \mathrm{rank}(M)\leq r' \text{ and } \|M\|_F = 1 \rbrace.
\end{equation*}
For any $\eps\in (0,1)$, we know there exists an $\eps$-covering of $\Gamma_{r'}$ of cardinality smaller than $\pr{\frac{9}{\eps}}^{r'(d+T+1)}$ \citep[][Lemma 3.1]{candes2011tight}. 
For $\eps = \min\left\lbrace \frac{c}{4}\frac{\sqrt{m}}{\sqrt{d}+\sqrt{m}} , \frac{1}{\sqrt{T}}\right\rbrace$ where $c$ is the constant in \cref{lemma:operatornorm}, let $\cN$ be an $\eps$-covering of $\Gamma_{r'}$ of minimal size. By union bound, taking $\log\frac{1}{\delta}$ of order $r'(d+T)\ln\pr{\frac{dT}{m}}$ in \cref{lemma:pointwisersc} of \cref{sec:lemmastau} then states that with probability at least $1-\exp\pr{-c_1 r'(T+d)}$, for all $M\in\cN$:
\begin{equation}\label{eq:concgrid}
\|\cL(M)\|^2 \geq \frac{1}{T} -\frac{c_1}{4}\frac{\max_{t} \|M^{(t)}\|_2}{\sqrt{m}T}\sqrt{r'(d+T)\ln\pr{\frac{dT}{m}}} - c_1^2 \frac{\max_{t}\|M^{(t)}\|^2}{mT}r'(d+T)\ln\pr{\frac{dT}{m}}.
\end{equation}
Also, \cref{lemma:operatornorm} states that with probability at least $1-2T\exp(-d)$,
\begin{equation}\label{eq:operatornorm0}
\|\cL\|_2^2 \leq c\frac{d + m}{mT}.
\end{equation}
Assume in the following that \cref{eq:concgrid,eq:operatornorm0} both hold.
Consider $\Delta \in \Gamma_{r'}$ and decompose as $\Delta = M + A$ where $M \in \cN$ and $\|A\|_F \leq \eps$.

Note that \cref{eq:concgrid} leads to the following inequality, using $a^2 - \frac{ab}{4} - b^2 \geq \frac{7}{8}a^2 - \frac{9}{8}b^2$:
\begin{equation*}
\|\cL(M)\|^2 \geq \frac{7}{8T} - \frac{9c_1^2}{8} \frac{\max_{t}\|M^{(t)}\|^2}{mT}r'(d+T)\ln\pr{\frac{dT}{m}}.
\end{equation*}
Moreover, $\max_{t}\|M^{(t)}\|^2 \leq 2\max_{t}\|\Delta^{(t)}\|^2 + 2\max_{t}\|A^{(t)}\|^2$. As $\eps \leq \frac{1}{\sqrt{T}} \leq \max_{t}\|\Delta^{(t)}\|$, this last inequality yields that $\max_{t}\|M^{(t)}\|^2 \leq 4\max_{t}\|\Delta^{(t)}\|^2$, and we thus have:
\begin{equation}\label{eq:concgrid1}
\|\cL(M)\|^2 \geq \frac{7}{8T} - \frac{9c_1^2}{2} \frac{\max_{t}\|\Delta^{(t)}\|^2}{mT}r'(d+T)\ln\pr{\frac{dT}{m}}.
\end{equation}
Thanks to \cref{eq:operatornorm0} and the choice of $\eps$, we also have
\begin{equation}\label{eq:A}
\|\cL(A)\|^2 \leq \frac{1}{16T}.
\end{equation}
Finally, this leads to
\begin{align*}
\|\cL(\Delta)\|^2 & \geq \pr{\|\cL(M)\|-\|\cL(A)\|}^2 \hspace{0cm} &\text{by triangle inequality}\\
& \geq \frac{\|\cL(M)\|^2}{2} - \|\cL(A)\|^2\hspace{0cm} &\text{using } (a-b)^2 \geq \frac{a^2}{2} - b^2 \\
& \geq \frac{3}{8T} - \frac{9c_1^2}{8} \frac{\max_{t}\|\Delta^{(t)}\|^2}{mT}r'(d+T)\ln\pr{\frac{dT}{m}} \hspace{0cm} & \text{from \cref{eq:concgrid1,eq:A}}.
\end{align*}
\end{proof}

We can now prove \cref{lemma:rsc}. By homogeinity, it suffices to show it for any $\Delta \in \cC$ of Frobenius norm~$1$. 
\cref{lemma:rscrank} yields that with high probability:
\begin{equation}\label{eq:rank}
\|\mathcal{L}(\Delta)\|_F^2 \geq \frac{3\|\Delta\|_F^2}{8T} -  c \frac{r(d+T)}{mT\eps^2} \max_{t \in [T]} \|\Delta^{(t)}\|^2\ln\pr{\frac{dT}{m}} \qquad \text{for all matrices of rank at most } \frac{32}{\eps^2}r.
\end{equation}
Recall that \cref{lemma:operatornorm} states that with high probability,
\begin{equation}\label{eq:operatornorm1}
\|\cL\|_2^2 \leq c\frac{d + m}{mT}.
\end{equation} 
Assume in the following of the proof that both \cref{eq:rank,eq:operatornorm1} hold for $\eps = \frac{1}{4\sqrt{c}}\frac{\sqrt{m}}{\sqrt{d}+\sqrt{m}}$.

Let $\Delta = \tilde{U} \tilde{\Sigma} \tilde{V}^\top$ be the SVD decomposition of $\Delta$, i.e. $\tilde{U}^\top \tilde{U} = I_{d}, \tilde{V}^\top \tilde{V} = I_{T}$ and $\tilde{\Sigma} = \mathrm{diag}(\sigma_1, \ldots, \sigma_{\min(d,T)})$, where the sequence $(\sigma_i)$ is non-increasing.

Denote in the following $\tSigma_{r'} = \mathrm{diag}(\sigma_1, \ldots, \sigma_{r'}, 0, \ldots, 0)$ and $\Delta_{r'} = \tU \tSigma_{r'} \tV^\top.$
Note that by definition of $\cC$, $\|\Delta \|_* \leq 4\sqrt{2r} \|\Delta \|_F$, i.e.
\begin{align*}
\sum_{k=1}^{\min(d,T)} \sigma_k \leq 4 \sqrt{2r}.
\end{align*}
By monotonicity of the sequence, this implies the following decrease rate for the singular values $\sigma_k$:
\begin{equation}\label{eq:sigmarate}
\sigma_k \leq \frac{4\sqrt{2r}}{k} \qquad \text{for any } k \leq \min(d,T).
\end{equation}
From this, it follows:
\begin{align*}
\| \Delta - \Delta_{r'} \|_F^2 & = \sum_{k=r'+1}^{\min(d,T)} \sigma_k^2 \\
& \leq 32r \sum_{k=r'+1}^{\min(d,T)} \frac{1}{k^2} \qquad \text{using \cref{eq:sigmarate}} \\
& \leq \frac{32 r}{r'}\qquad \text{by integral comparison}.
\end{align*}
Fix in the following $r' = \frac{32r}{\eps^2}$ and decompose $\Delta = \Delta_{r'} + \pr{\Delta - \Delta_{r'}}$. Note that $\|\Delta - \Delta_{r'}\|_F \leq \eps$ and $\Delta_{r'}$ is of rank at most $\frac{32}{\eps^2}r$. 
Moreover, the columnspaces of $\Delta_{r'}$ and $\Delta - \Delta_{r'}$ are orthogonal, thanks to the used decomposition. Thanks to that, it follows that $\max_{t \in [T]} \| \Delta_{r'}^{(t)}\|_2 \leq \max_{t \in [T]} \| \Delta^{(t)}\|_2 $. 
From \cref{eq:rank,eq:operatornorm1}, we then have for $\eps = \frac{1}{4\sqrt{c}}\frac{\sqrt{m}}{\sqrt{d}+\sqrt{m}}$.
\begin{gather*}
\|\mathcal{L}(\Delta_{r'})\|_F^2 \geq \frac{3\|\Delta_{r'}\|_F^2}{8T} -  2 \frac{rd(d+T)}{m^2T} \max_{t \in [T]} \|\Delta^{(t)}\|^2\ln\pr{\frac{dT}{m}} \\
\|\cL(\Delta - \Delta_{r'})\|_F^2 \leq \frac{1}{16T}.
\end{gather*}
As $\|\Delta_{r'}\|_F^2 \geq 1-\frac{1}{16}$, we can show similarly to the proof of \cref{lemma:rscrank} that these two inequalities yield
\begin{equation*}
\|\cL(\Delta)\|^2 \geq \frac{1}{10T} - c_2\frac{rd(d+T)}{m^2T} \max_{t} \|\Delta^{t)}\|^2\ln\pr{\frac{dT}{m}}
\end{equation*}
for universal positive constants $c_1, c_2$. Moreover, this inequality holds uniformly over all $\Delta \in\cC$ of norm $1$, as soon as the \cref{eq:rank,eq:operatornorm1} simultaneously hold. \cref{lemma:operatornorm,lemma:rscrank} allow to conclude.

\subsection{Proof of \cref{lemma:noise-level}}\label{proof:noise-level}
By trace duality property we have
\begin{align*}
\abs[\bigg]{\frac1{mT}\sum_{(i,t)\in[m]\times[T]}
\eps_i^t\inner{x_i^t, M\pj[t]}}
=\abs[\bigg]{\frac1{mT}\sum_{(i,t)\in[m]\times[T]}
\eps_i^t\tr(e_t (x_i^t)\tp M)}
\leq \norm{M}_*\norm{F},
\end{align*}
where $(e_t)_j=\indicator{j=t}$ for $1\leq j\leq T$ and
$F\coloneqq\frac1{mT}\sum_{t=1}^N\sum_{i=1}^m\eps_i^t e_t (x_i^t)\tp$.

The remaining of the proof aims at bounding the spectral norm of $F \in \R^{T\times d}$. Equivalently, we consider $A = \frac{T \sqrt{m}}{\sigma} F$. Note that the $t$-th row of $A$ is
\begin{equation*}
A_t = \frac1{\sqrt{m}\sigma} \sum_{i=1}^m \eps_i^t x_i^t.
\end{equation*}
In particular, the rows of $A_t$ are i.i.d. and isotropic, i.e. $\mathbb{E}[A_i\tp A_i] = I_d$. We can then show \cref{lemma:heavyconcentration} in \cref{sec:lemmastau}, which shows for a fixed $y \in \mathbb{S}^{d-1}$ and for any $\varepsilon \geq \eps_0$:
\begin{equation}\label{eq:heavyconcentration}
\mathbb{P}\pr{\|Ay\|_2^2 \geq (1+\eps)T} \leq (2T+c)\exp\pr{-c'\min\lbrace T\varepsilon, \sqrt{mT\varepsilon} \rbrace},
\end{equation}
where $c, c'$ and $\eps_0$ are universal constants.

\medskip

Let $\mathcal{N}$ be a $\frac{1}{4}$-net covering of $\mathbb{S}^{d-1}$ of cardinality $9^d$. Using \cref{eq:heavyconcentration} and a union bound argument leads, for any $\varepsilon \geq \varepsilon_0$ to
\begin{align*}
\mathbb{P}\pr{\max_{y\in \mathcal{N}} \|Ay\|_2^2 \geq (1+\varepsilon)T} & \leq (2T+c)\exp\pr{d\ln(9)-c'\min\lbrace T\varepsilon, \sqrt{mT\varepsilon} \rbrace}
\end{align*} 
As we assumed that $T = \Omega(d)$, taking $ \eps = c_2\pr{1+\frac{d^2}{mT}}$ for some positive constant $c_2$ gives
\begin{equation*}
\mathbb{P}\pr{\max_{y\in \mathcal{N}} \|Ay\|_2^2 \geq c_1(T + \frac{d^2}{m})}\leq (2T+c) e^{-c_0 d}.
\end{equation*}
Thus, with probability at least $1-(2T+c) e^{-c_0 d}$, $\max_{y\in \mathcal{N}} \|Ay\|_2 \leq \sqrt{c_1}\sqrt{T + \frac{d^2}{m}}$. Using a classical covering argument \citep[e.g.][Lemma 4.4.1]{vershynin2018high}, this implies that 
\begin{equation*}
\|A\|_2 \leq \frac{4}{3}\sqrt{c_1}\sqrt{T + \frac{d^2}{m}},
\end{equation*}
which leads to \cref{lemma:noise-level}, since $\|A\|_2 = \frac{T\sqrt{m}}{\sigma}\|F \|_2$.
%

\subsection{Auxiliary lemmas}\label{sec:lemmastau}

\begin{lemm}\label{lemma:pointwisersc}
For any matrix $M \in \mathbb{R}^{d\times T}$ and $\delta>0$, with probability at least $1-\delta$:
\begin{equation*}
\left| \|\cL(M)\|^2-\frac{\|M\|_F^2}{T} \right| \leq \frac{c}{4}\frac{\max_{t} \|M^{(t)}\|_2 \|M\|_F}{\sqrt{m}T}\sqrt{\log\frac{1}{\delta}} + c^2 \frac{\max_{t}\|M^{(t)}\|^2}{mT}\log\frac{1}{\delta},
\end{equation*}
where $c$ is a universal constant.
\end{lemm}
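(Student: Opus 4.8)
The plan is to read the statement as a standard Bernstein-type concentration inequality for a sum of $mT$ independent, centered, sub-exponential random variables. First I would note that, by definition of $\cL$,
\[
\|\cL(M)\|_F^2 = \frac{1}{mT}\sum_{t=1}^{T}\sum_{i=1}^{m}\langle x_i^t, M^{(t)}\rangle^2 ,
\]
and that by isotropy of the features (\cref{ass:random}), $\E\big[\langle x_i^t, M^{(t)}\rangle^2\big] = (M^{(t)})^\top \E[x_i^t (x_i^t)^\top] M^{(t)} = \|M^{(t)}\|^2$, so that $\E\big[\|\cL(M)\|_F^2\big] = \frac1T\sum_t \|M^{(t)}\|^2 = \frac{\|M\|_F^2}{T}$. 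Hence the quantity to control equals $\frac{1}{mT}\sum_{i,t} Z_i^t$, where $Z_i^t := \langle x_i^t, M^{(t)}\rangle^2 - \|M^{(t)}\|^2$ are independent and centered.

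Next, since each $x_i^t$ is $1$-sub-Gaussian, $\langle x_i^t, M^{(t)}\rangle$ is centered sub-Gaussian with sub-Gaussian norm at most $c_0\|M^{(t)}\|$ for a universal $c_0$; squaring and recentering, $Z_i^t$ is centered sub-exponential with $\|Z_i^t\|_{\psi_1} \le c_1\|M^{(t)}\|^2$. I would then apply Bernstein's inequality for independent centered sub-exponential variables, in the form with individual $\psi_1$-norms (see e.g. \citealp{vershynin2018high}): with variance proxy
\[
\textstyle\sigma^2 := \sum_{i,t}\|Z_i^t\|_{\psi_1}^2 \le c_1^2\, m\sum_{t}\|M^{(t)}\|^4 \le c_1^2\, m\,\big(\max_t\|M^{(t)}\|^2\big)\|M\|_F^2
\]
and envelope $K := \max_{i,t}\|Z_i^t\|_{\psi_1} \le c_1\max_t\|M^{(t)}\|^2$, one obtains, for every $\delta\in(0,1)$, with probability at least $1-\delta$,
\[
\textstyle\big|\sum_{i,t} Z_i^t\big| \le c_2\big(\sqrt{\sigma^2\log\frac1\delta} + K\log\frac1\delta\big) \le c_3\big(\sqrt{m}\,\big(\max_t\|M^{(t)}\|\big)\|M\|_F\sqrt{\log\tfrac1\delta} + \big(\max_t\|M^{(t)}\|^2\big)\log\tfrac1\delta\big).
\]
Dividing by $mT$ yields exactly the claimed bound, with all universal constants absorbed into a single $c$ (the specific constants $c/4$ and $c^2$ in the statement are chosen only to streamline the later application of this lemma in \cref{sec:rscproof}).

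Honestly, there is no real obstacle here; the computation is routine. The only points requiring mild care are: (i) the sub-exponential norm of $Z_i^t$ must be bounded by $\|M^{(t)}\|^2$ \emph{uniformly in} $i$, so that the envelope $K$ reduces to a maximum over tasks $t$ rather than over all pairs $(i,t)$; and (ii) using $\sum_t\|M^{(t)}\|^4 \le \big(\max_t\|M^{(t)}\|^2\big)\sum_t\|M^{(t)}\|^2$ to produce the mixed factor $\max_t\|M^{(t)}\|\cdot\|M\|_F$ in the Gaussian-type term, rather than the cruder $\|M\|_F^2$ that would follow from $\max_t\|M^{(t)}\| \le \|M\|_F$.
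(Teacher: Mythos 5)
Your proof is correct and follows essentially the same route as the paper's: identify $\|\cL(M)\|_F^2$ as an average of independent sub-exponential variables $\langle x_i^t, M^{(t)}\rangle^2$ with $\psi_1$-norm $\lesssim\|M^{(t)}\|^2$, apply Bernstein's inequality, and use $\sum_t\|M^{(t)}\|^4\le\|M\|_F^2\max_t\|M^{(t)}\|^2$ to get the mixed variance term. No gaps.
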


\begin{proof}
Recall that 
\begin{equation*}
\|\cL(M)\|^2 = \frac{1}{mT}\sum_{(i,t)\in[m]\times[T]} \langle M^{(t)}, x_i^t \rangle^2.
\end{equation*}
As a consequence, $\E[\|\cL(M)\|^2] = \frac{\|M\|_F^2}{T}$. Moreover, the random variables $\langle M^{(t)}, x_i^t \rangle^2$ are independent and $c'\|M^{(t)}\|^2$-sub-exponential for some constant $c'$. Bernstein inequality then yields for some universal constant $c_1$ \citep[][Theorem 2.8.1]{vershynin2018high}:
\begin{align*}
\p\pr{\left|\|\cL(M)\|^2 -\frac{\|M\|_F^2}{T} \right| \geq \frac{\eps}{mT}} & \leq 2\exp\pr{-c_1\min\left\lbrace \frac{\eps^2}{c'^2 m \sum_{t=1}^T \|M^{(t)}\|^4}, \frac{\eps}{c'\max_{t} \|M^{(t)}\|^2} \right\rbrace} \\
& \leq 2\exp\pr{-c_1\min\left\lbrace \frac{\eps^2}{c'^2 m \|M\|_F^2 \max_{t} \|M^{(t)}\|^2}, \frac{\eps}{c'\max_{t} \|M^{(t)}\|^2} \right\rbrace}.
\end{align*}
The second inequality comes from the inequality $\sum_{t=1}^T \|M^{(t)}\|^4 \leq \|M\|_F^2 \max_{t} \|M^{(t)}\|^2$.

\medskip

Taking $\eps = c\max_{t} \|M^{(t)}\|_2 \|M\|_F\sqrt{m\log\frac{1}{\delta}} + c^2 \max_{t}\|M^{(t)}\|^2\log\frac{1}{\delta}$ for a large enough constant $c$ leads to \cref{lemma:pointwisersc}.
\end{proof}

\begin{lemm}\label{lemma:heavyconcentration}
Let $A = \frac1{\sigma\sqrt{m}}\sum_{t=1}^N\sum_{i=1}^m\eps_i^t e_t (x_i^t)\tp$. For a fixed $y \in \mathbb{S}^{d-1}$ and any $\varepsilon\geq \eps_0$:
\begin{equation*}
\mathbb{P}\pr{\|Ay\|_2^2 \geq (1+\eps)T} \leq (2T+2)\exp\pr{-c'\min\lbrace T\varepsilon, \sqrt{mT\varepsilon} \rbrace},
\end{equation*}
where $\eps_0$ and $c'$ are positive universal constants.
\end{lemm}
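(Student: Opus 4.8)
The plan is to expand $\|Ay\|_2^2$ explicitly and recognize it as a sum of $T$ independent terms, one per task, each of which is a squared linear combination of heavy-tailed (product-of-subGaussian) random variables. Writing $X_t = [x_1^t,\ldots,x_m^t]^\top$ as in the proof of \cref{lemma:operatornorm}, the $t$-th coordinate of $Ay$ is $Z_t \coloneqq \frac{1}{\sigma\sqrt m}\sum_{i=1}^m \eps_i^t\langle x_i^t, y\rangle$, so that $\|Ay\|_2^2 = \sum_{t=1}^T Z_t^2$ with the $Z_t$ i.i.d. Each $Z_t$ is a normalized sum of $m$ i.i.d. mean-zero terms $\eps_i^t\langle x_i^t,y\rangle/(\sigma\sqrt m)$; since $\eps_i^t$ is $\sigma$-subGaussian and $\langle x_i^t,y\rangle$ is $1$-subGaussian (isotropy plus $\|y\|_2=1$), each product is sub-exponential with parameter $O(1)$, hence $Z_t$ is a normalized sum with $\E Z_t^2 = 1$ and sub-exponential tails uniformly in $m$. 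Consequently $Z_t^2$ is heavy-tailed — roughly sub-Weibull of order $1/2$ — which is exactly why Bernstein's inequality is inadequate and why one must invoke the sharp concentration bounds of \citet{bakhshizadeh2020sharp} for sums of such variables.

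The key steps, in order, are: (i) reduce to controlling $\sum_{t=1}^T Z_t^2$ and establish that each $Z_t^2$ has the tail $\p(Z_t^2 \ge s) \le C\exp(-c\min\{s, \sqrt{ms}\})$, obtained by conditioning on $(\eps_i^t)_i$ (so $Z_t$ is conditionally subGaussian with variance proxy $\frac{1}{m\sigma^2}\sum_i (\eps_i^t)^2$, giving a subGaussian tail on the good event $\sum_i(\eps_i^t)^2 \lesssim m\sigma^2$) and handling the complementary event where $\sum_i(\eps_i^t)^2$ is atypically large via a subexponential bound — this is the source of the $\min\{\cdot,\sqrt{m\cdot}\}$ structure and of the additive $2T$ in the prefactor (one bad-event term per task); (ii) center: write $\|Ay\|_2^2 - T = \sum_t (Z_t^2 - 1)$, a sum of $T$ i.i.d. centered variables with the above tail; (iii) apply the sum-concentration result for heavy-tailed summands \citep[Theorem ... or the relevant corollary]{bakhshizadeh2020sharp} to $\sum_t(Z_t^2-1)$ at deviation level $T\eps$: in the regime $\eps \ge \eps_0$ the deviation is large enough that the bound is governed by the single-summand tail of the largest term together with a Gaussian-type contribution, yielding $\p(\sum_t(Z_t^2-1)\ge T\eps) \le C\exp(-c'\min\{T\eps,\sqrt{mT\eps}\})$; (iv) combine the $O(1)$ bad-event prefactors across the $T$ tasks and across the conditioning step to arrive at the claimed $(2T+2)$ prefactor.

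The main obstacle is step (iii): applying an off-the-shelf heavy-tailed sum bound so that the exponent comes out as exactly $c'\min\{T\eps,\sqrt{mT\eps}\}$ rather than something weaker. One must track how the two tail regimes of a single $Z_t^2$ (the sub-exponential regime for moderate $s$ and the slower $\sqrt{ms}$ regime for large $s$) aggregate under summation over $t$: the linear part $T\eps$ should come from the ``bulk''/Gaussian behavior of the sum of $T$ light-ish tails, while the $\sqrt{mT\eps}$ part should come from the scenario where the excess is concentrated in a few summands deep in their heavy regime. Getting the constants and the $\min$ to line up — and confirming the restriction $\eps \ge \eps_0$ is precisely what is needed to be in the large-deviation regime where the \citet{bakhshizadeh2020sharp} bound applies cleanly — is where the real work lies; the rest is bookkeeping of subGaussian/sub-exponential norms already standard from \citet{vershynin2018high}.
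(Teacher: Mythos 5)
Your proposal follows essentially the same route as the paper: reduce $\|Ay\|_2^2$ to a sum of $T$ i.i.d.\ nonnegative variables $Z_t$ with mean $1$ and tail $\exp(-c\min\{s,\sqrt{ms}\})$ (the paper gets this tail directly from Bernstein's inequality for the sub-exponential products $\eps_i^t\langle x_i^t,y\rangle$ rather than by conditioning on the noise, but both are standard), and then invoke the heavy-tailed concentration result of \citet{bakhshizadeh2020sharp}, whose applicability (bounding $\alpha_L$ and $\eps_{\max}$ by universal constants) is indeed the only real work left. One small correction: the $2T$ in the prefactor comes from the Bakhshizadeh et al.\ theorem itself (a union bound over which summand is atypically large), not from a per-task bad event in the conditioning step.
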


\begin{proof}
By definition of $A$ and $A_t$:
\begin{align*}
\|Ay \|_2^2 & = \sum_{t=1}^T \langle A_t, y\rangle^2 \\
& = \sum_{t=1}^T \frac{1}{m}\pr{\sum_{i=1}^m \frac{\eps_i^t}{\sigma} \langle x_i^t, y \rangle}^2.
\end{align*}
Define for the remaining of the proof $X_i^t = \langle x_i^t, y \rangle$, $Y_i^t = \frac{\eps_i^t}{\sigma}$ and $Z_t = \frac1{m}\pr{\sum_{i=1}^m X_i^t Y_i^t}^2$.
By design of the problem, $X_i^t$ and $Y_i^t$ are independent, $0$-mean, $1$-subgaussian variables. The Bernstein inequality \citep[e.g.][Theorem 2.8.1.]{vershynin2018high} then gives for a universal positive constant $c'$:
\begin{align*}
\p\pr{\left| \sum_{i=1}^m X_i^t Y_i^t \right|\geq \eps} \leq 2 \exp\pr{-c\min\lbrace\frac{\varepsilon^2}{m}, \varepsilon\rbrace}.
\end{align*}
And so
\begin{align*}
\p\pr{Z_t\geq \eps} & = \p\pr{\frac{1}{m}\left( \sum_{i=1}^m X_i^t Y_i^t \right)^2\geq \eps} \\
& \leq 2 \exp\pr{-c\min\lbrace\varepsilon, \sqrt{m\varepsilon}\rbrace}.
\end{align*}
Moreover, note that the $Z_t$ are independent, positive and $\E[Z_t]=1$. We now want to use the following concentration result on heavy tailed distributions:
\begin{lemm}[\citealt{bakhshizadeh2020sharp}, Theorem 1]\label{lemma:heavycite}
Let $Z^{L} = Z \indicator{Z \leq L}$ and $c$ be any constant such that $\alpha_L\leq c$ for any $L \in \R$ where
\begin{equation*}
\alpha_L = \E\left[\pr{Z^L -  1}\indicator{Z\leq 1} + \pr{Z^L-1}^2\exp\left\lbrace \frac{c'\min(\eps, \sqrt{m\eps})-\ln(2)}{2L}(Z^L-1)\right\rbrace\indicator{Z^L> 1}\right]
\end{equation*}
 and 
\begin{equation*}
\eps_{\max} = \sup\left\lbrace \eps\geq 0 \mid \eps \leq \frac{c}{2} \frac{c'\min(\eps, \sqrt{m\eps})-\ln(2)}{T\eps} \right\rbrace,
\end{equation*}
then for any $\eps\geq\eps_{\max}$:
\begin{equation*}
\p\pr{\sum_{t=1}^T Z_t - T > T\eps} \leq 2\exp\pr{-\frac{c'\min(T\eps, \sqrt{Tm\eps})}{4}} + 2T \exp\pr{-c'\min(T\eps, \sqrt{Tm\eps})}.
\end{equation*}
\end{lemm}
Using Lemma~2 from \citep{bakhshizadeh2020sharp}, we can bound $\alpha_L$ as follows:
\begin{align*}
\alpha_L \leq 1 + 2\int_{0}^{\infty} \exp\pr{-\frac{c'}{2}\min(u+1, \sqrt{m(u+1)})}(2u + \frac{c't}{2}\min(u, \sqrt{mu)}))\df u.
\end{align*}
A simple calculation allows to bound the integral by some constant value that does not depend on $m$, i.e. we can use \cref{lemma:heavycite} where $c$ is a universal constant. $\eps_{\max}$ is then smaller than some universal constant $\eps_0$ and thus, for $\eps \geq \eps_0$:
\begin{align*}
\p\pr{\sum_{t=1}^T Z_t > (1+\eps)T}\leq (2T+2)\exp\pr{-\frac{c'}{4}\min\{T\eps, \sqrt{mT\eps}\}}.
\end{align*}
This leads to \cref{lemma:heavyconcentration} as $\sum_{t=1}^T Z_t = \| Ay\|_2^2$.
\end{proof}

\subsection{Proof of \cref{coro:angle}}\label{sec:angleproof}
As $M^*$ is of rank $r$, the definition of $\tilde{M}$ in \cref{eq:projrank} implies
\begin{equation*}
    \|\hat{M} - \tilde{M}\|_F \leq \|\hat{M} - M^*\|_F.
\end{equation*}
By triangle inequality, $\|\tilde{M}- M^*\|_F \leq 2 \|\hat{M} - M^*\|_F$.
A direct application of Lemma~16 by \citet{tripuraneni2021provable}, which we recall below, then allows to conclude.
\begin{lemm}[\citealt{tripuraneni2021provable}, Lemma~16]
Suppose we have matrices $\tilde{B},B \in \R^{d\times r}$ and $\tilde{\alpha},\alpha\in\R^{r\times T}$ such that $\tilde{B}^\top \tilde{B} = I_r = B^{\top}B$ and $\|\tilde{B}\tilde{\alpha}-B \alpha\|^2_F\leq \varepsilon$,
then $$\sin^2 \theta\left(B,\tilde{B}\right) \leq \frac{\varepsilon}{\lambda_r\left(\alpha\alpha^\top\right)}.$$
\end{lemm}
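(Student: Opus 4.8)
The plan is to rewrite $\sin^2\theta(B,\tilde B)$ through the orthogonal projector $P := I_d - \tilde B \tilde B\tp$ onto the orthogonal complement of the column space of $\tilde B$. Since $B\tp B = I_r = \tilde B\tp\tilde B$, the squared cosines of the principal angles between the two subspaces are the eigenvalues of $B\tp\tilde B\tilde B\tp B$, all lying in $[0,1]$; taking $\theta(B,\tilde B)$ to be the largest principal angle, this gives $\sin^2\theta(B,\tilde B) = \lambda_1\pr{I_r - B\tp\tilde B\tilde B\tp B}$, and using $B\tp B = I_r$ this equals $\lambda_1\pr{B\tp P B}$, the largest eigenvalue of the positive semidefinite $r\times r$ matrix $Q := B\tp P B$. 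I would record this identity first, so that the target bound becomes $\lambda_1(Q)\,\lambda_r(\alpha\alpha\tp)\leq\varepsilon$.

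The key observation is the cancellation $P\tilde B = (I_d - \tilde B\tilde B\tp)\tilde B = 0$, which uses $\tilde B\tp\tilde B = I_r$. Hence $P B\alpha = P(B\alpha - \tilde B\tilde\alpha)$, and since an orthogonal projector is a contraction in Frobenius norm,
\[
\|P B\alpha\|_F^2 = \|P(B\alpha - \tilde B\tilde\alpha)\|_F^2 \leq \|B\alpha - \tilde B\tilde\alpha\|_F^2 \leq \varepsilon .
\]

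It remains to lower bound the same quantity by $\lambda_1(Q)\,\lambda_r(\alpha\alpha\tp)$. Using that $P$ is an orthogonal projector, $\|PB\alpha\|_F^2 = \tr(\alpha\tp B\tp P B\alpha) = \tr\pr{Q\,\alpha\alpha\tp}$. Both $Q$ and $\alpha\alpha\tp$ are positive semidefinite, so expanding $\alpha\alpha\tp$ in its eigenbasis and using that the quadratic forms of $Q$ along those eigenvectors are nonnegative yields $\tr\pr{Q\,\alpha\alpha\tp}\geq \lambda_r(\alpha\alpha\tp)\,\tr(Q)\geq \lambda_r(\alpha\alpha\tp)\,\lambda_1(Q)$, where $\lambda_r = \lambda_{\min}$ because $\alpha\alpha\tp$ is $r\times r$. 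Combining this with the previous display gives $\lambda_r(\alpha\alpha\tp)\,\sin^2\theta(B,\tilde B)\leq\varepsilon$, which is the claim after dividing by $\lambda_r(\alpha\alpha\tp)$.

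The only genuinely delicate point is the first identity $\sin^2\theta(B,\tilde B) = \lambda_1(B\tp P B)$, i.e.\ fixing the convention that $\theta$ is the largest principal angle and relating it to the projector $P$; everything afterwards is two short trace manipulations. If one prefers to sidestep this identification, the same argument goes through verbatim with $\sin^2\theta(B,\tilde B)$ replaced by $\tr(B\tp P B) = \sum_i\sin^2\theta_i(B,\tilde B)$, using the trivial inequality $\sin^2\theta(B,\tilde B)\leq\tr(B\tp P B)$, which delivers the stated bound a fortiori.
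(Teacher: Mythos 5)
Your proof is correct. The paper itself does not prove this lemma --- it imports it verbatim from \citet{tripuraneni2021provable} --- and your argument (project onto the complement of $\mathrm{col}(\tilde B)$ so that $P\tilde B\tilde\alpha$ vanishes, then lower-bound $\|PB\alpha\|_F^2=\tr\bigl(B^\top P B\,\alpha\alpha^\top\bigr)$ by $\lambda_r(\alpha\alpha^\top)\lambda_1(B^\top P B)=\lambda_r(\alpha\alpha^\top)\sin^2\theta(B,\tilde B)$) is exactly the standard one used in that reference, so it fills the omitted proof with essentially the intended approach.
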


\subsection{Proof of \cref{thm:MoM}}\label{sec:MoMproof}

Similarly to Theorem~7 by \citet{tripuraneni2021provable}, we can show the following lemma. Its proof is omitted as it follows the exact same lines as the proof from \citep{tripuraneni2021provable}.

\begin{lemm}\label{lemma:MoM1}
For any set $S \subset [T]$ such that $\card{S}\geq c$, then with probability at least $1-\frac{1}{m^2T^2}$:
\begin{equation*}
\left\| \frac{1}{m\, \card{S}}\sum_{i=1}^m \sum_{t\in S} \pr{(y_i^t)^2 x_i^t \pr{x_i^t}\tp - \mathbb{E}\left[ (y_i^t)^2 x_i^t \pr{x_i^t}\tp\right]}\right\| \leq c' \log^3(dmT) \pr{\sigma^2 + C} \pr{\sqrt{\frac{d}{mT}} + \frac{d}{mT}},
\end{equation*}
for universal constants $c$ and $c'$.
\end{lemm}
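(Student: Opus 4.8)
The statement to prove is \cref{lemma:MoM1}, which is a spectral concentration bound for the averaged fourth-moment-type matrix $\frac1{m|S|}\sum_{i,t}\big((y_i^t)^2 x_i^t(x_i^t)\tp - \mathbb{E}[(y_i^t)^2 x_i^t(x_i^t)\tp]\big)$. The high-level strategy mirrors the matrix-concentration argument used by \citet{tripuraneni2021provable} for their Theorem~7: reduce the operator-norm deviation to a scalar concentration problem via an $\varepsilon$-net on $\mathbb{S}^{d-1}$, control each scalar quadratic form $\langle x_i^t, v\rangle^2 (y_i^t)^2$ through its tail behavior, and then apply a Bernstein-type (or truncated Bernstein) inequality for sums of heavy-tailed independent variables, finishing with a union bound over the net.

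Concretely, first I would fix $v \in \mathbb{S}^{d-1}$ and study the random variable $W_i^t := (y_i^t)^2 \langle x_i^t, v\rangle^2$. Since $y_i^t = \langle M^{*\,(t)}, x_i^t\rangle + \varepsilon_i^t$ with $x_i^t$ being $1$-sub-Gaussian and $\varepsilon_i^t$ being $\sigma$-sub-Gaussian, and using \cref{ass:diversity} to bound $\|M^{*\,(t)}\|^2 \le C$, the variable $y_i^t$ is $O(\sqrt{C}+\sigma)$-sub-Gaussian; hence $(y_i^t)^2$ is sub-exponential and $(y_i^t)^2\langle x_i^t,v\rangle^2$ is a product of two sub-exponential variables, i.e. it has a sub-Weibull tail of order $1/2$ with parameter $O((\sigma^2+C))$. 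I would then invoke a Bernstein inequality for sub-Weibull($1/2$) summands (this is precisely where the $\log^3$ factors and the $\sqrt{d/(mT)} + d/(mT)$ form emerge: the variance term gives the $\sqrt{d/(mT)}$ rate for the net of size $e^{O(d)}$, while the heavier tail contributes the $d/(mT)$ term, and iterating the truncation/tail bound for the polynomial-in-$mT$ failure probability $1/(m^2T^2)$ produces the $\log^3(dmT)$ inflation). This controls $\big|\frac1{m|S|}\sum_{i,t}(W_i^t - \mathbb{E}W_i^t)\big|$ with probability $1 - (dmT)^{-c}$.

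Next I would upgrade this to a uniform bound over $v$: take $\mathcal{N}$ a $\frac14$-net of $\mathbb{S}^{d-1}$ of cardinality $\le 9^d$, union-bound the scalar estimate over $\mathcal{N}$ (absorbing the $d\ln 9$ into the exponent, which is why $|S| \ge c$ and the $d/(mT)$ scaling are needed for the rate to survive), and then use the standard net-to-operator-norm comparison for symmetric matrices (\eg \citet[Lemma~4.4.1]{vershynin2018high}) to deduce $\|\frac1{m|S|}\sum_{i,t}(\cdot)\| \le 2\max_{v\in\mathcal{N}}|\langle (\cdot) v, v\rangle|$. Combining gives the claimed bound with the stated probability $1 - 1/(m^2T^2)$ after choosing the constants. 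The main obstacle I anticipate is the heavy-tailed concentration step: because $(y_i^t)^2\langle x_i^t,v\rangle^2$ is only sub-exponential-times-sub-exponential rather than sub-exponential, a naive Bernstein bound does not apply, and one must either use a truncation argument (splitting at level $L \sim \mathrm{poly}(\log(dmT))$ and bounding the truncated mean and the tail contribution separately, as in the proof of \cref{lemma:heavyconcentration}) or a dedicated moment/Orlicz-norm inequality for products of sub-exponentials; getting the exact $\log^3(dmT)$ power and the clean $(\sigma^2+C)(\sqrt{d/(mT)}+d/(mT))$ form requires care in tracking these constants. Everything else is routine, so — following the remark in the text — I would simply cite that this argument is identical to the proof of Theorem~7 in \citet{tripuraneni2021provable} and omit the details.
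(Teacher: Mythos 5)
Your proposal matches the paper's treatment: the paper gives no proof of this lemma at all, stating only that it ``follows the exact same lines as the proof of Theorem~7 by \citet{tripuraneni2021provable}'', which is precisely the net-plus-heavy-tailed-Bernstein argument (with truncation producing the $\log^3$ factor) that you sketch and then likewise defer to. Your reconstruction correctly identifies the key technical point — that $(y_i^t)^2\langle x_i^t,v\rangle^2$ is only sub-Weibull of order $1/2$, so a truncated Bernstein inequality is needed — so there is nothing to flag.
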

\cref{lemma:expectation} below explicits the expected term in \cref{lemma:MoM1}. It generalizes the Lemma~2 by \citet{tripuraneni2021provable} to the case of any spherically symmetric distribution.

\begin{lemm}\label{lemma:expectation}
If $x$ is spherically symmetric and $y = \langle u, x \rangle + \eps$, where $\eps$ is independent from $x$, centered and $\sigma$-subgaussian, then
\begin{equation*}
\mathbb{E}\left[ (y^t)^2 x^t \pr{x^t}\tp\right] = \kappa u u\tp + \pr{\var(\eps) + \gamma \|u\|^2} I_d, 
\end{equation*}
where $\gamma = \E[\langle e_1, x\rangle^2\langle e_2, x\rangle^2]$ and $\kappa = \E[\langle e_1, x\rangle^4] - \gamma$.
\end{lemm}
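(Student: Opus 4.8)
\textbf{Proof proposal for \cref{lemma:expectation}.}

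The plan is to expand $(y)^2 = \langle u,x\rangle^2 + 2\eps\langle u,x\rangle + \eps^2$ and compute the expectation of each of the three resulting terms times $xx\tp$, exploiting independence of $\eps$ from $x$ and the spherical symmetry of $x$. The cross term $2\E[\eps\langle u,x\rangle\, xx\tp] = 2\E[\eps]\,\E[\langle u,x\rangle\, xx\tp]$ vanishes since $\eps$ is centered. The term $\E[\eps^2\, xx\tp] = \var(\eps)\,\E[xx\tp] = \var(\eps)\, I_d$, where $\E[xx\tp] = I_d$ follows from spherical symmetry together with the isotropy normalization in \cref{ass:random} (spherical symmetry forces $\E[xx\tp]$ to be a scalar multiple of the identity, and the scalar is fixed by $\E[x\tp x] = d$). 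So the entire content of the lemma is the identity $\E[\langle u,x\rangle^2\, xx\tp] = \kappa\, uu\tp + \gamma\|u\|^2 I_d$.

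To establish this, I would first reduce to the case $u = \|u\|\, e_1$ by rotational invariance: let $R$ be any orthogonal matrix with $R e_1 = u/\|u\|$; then since $x$ and $R\tp x$ have the same distribution, $\E[\langle u,x\rangle^2 xx\tp] = \|u\|^2\, R\, \E[\langle e_1,x\rangle^2 xx\tp]\, R\tp$, so it suffices to compute $G \coloneqq \E[\langle e_1,x\rangle^2 xx\tp]$ and check it has the form $\kappa e_1 e_1\tp + \gamma I_d$. The entries of $G$ are $G_{jk} = \E[x_1^2 x_j x_k]$. For $j \neq k$, at least one of the indices $j,k$ differs from $1$ and appears to an odd power, so by invariance under the sign flip of that coordinate (a special orthogonal transformation preserved by spherical symmetry) the expectation is $0$; hence $G$ is diagonal. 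For the diagonal, $G_{11} = \E[x_1^4]$ and $G_{jj} = \E[x_1^2 x_j^2] = \E[\langle e_1,x\rangle^2\langle e_2,x\rangle^2] = \gamma$ for $j \geq 2$ (all such entries are equal by permutation invariance of the coordinates under spherical symmetry). Writing $\E[x_1^4] = (\E[x_1^4] - \gamma) + \gamma = \kappa + \gamma$ gives $G = \kappa\, e_1 e_1\tp + \gamma\, I_d$, and unrolling the reduction yields $\E[\langle u,x\rangle^2 xx\tp] = \kappa\|u\|^2\, (u/\|u\|)(u/\|u\|)\tp\cdot\|u\|^0 + \ldots$; more cleanly, $\|u\|^2 R(\kappa e_1 e_1\tp + \gamma I_d)R\tp = \kappa\, uu\tp + \gamma\|u\|^2 I_d$. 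Summing the three contributions gives the claimed formula with $\var(\eps) + \gamma\|u\|^2$ as the coefficient of $I_d$.

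I do not anticipate a genuine obstacle here; the only mild subtlety is being careful that "spherically symmetric" is being used for two distinct consequences — vanishing of odd moments (via coordinate sign flips and permutations) and $\E[xx\tp] \propto I_d$ — and that the normalization constant in the latter is pinned down by \cref{ass:random} rather than by spherical symmetry alone. One should also note $\gamma$ and $\kappa$ are well-defined (finite) because $x$ is $1$-sub-Gaussian, so fourth moments exist, and that the definitions of $\gamma,\kappa$ in the statement are consistent with these coordinate computations since spherical symmetry makes $\E[\langle e_i,x\rangle^4]$ and $\E[\langle e_i,x\rangle^2\langle e_j,x\rangle^2]$ independent of the choice of the (distinct) basis indices.
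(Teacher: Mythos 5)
Your proposal is correct and takes essentially the same route as the paper: both expand $y^2$, kill the cross term by centering of $\eps$, reduce the quartic term to $u=e_1$ by rotational invariance, and show the matrix $\E[\langle e_1,x\rangle^2 xx\tp]$ is diagonal via coordinate sign-flip/permutation symmetries, with entries $\kappa+\gamma$ and $\gamma$. The only cosmetic slip is calling a single coordinate sign flip a \emph{special} orthogonal transformation (it has determinant $-1$), which is harmless since spherical symmetry here means invariance under all orthogonal transformations.
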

Note that in the Gaussian case $\gamma = 1$ and $\kappa = 2$, hence recovering the previous result by \citet{tripuraneni2021provable}.

\begin{proof}
Using the relation between $y$ and $x$, it comes:
\begin{equation*}
    \mathbb{E}\left[ (y^t)^2 x^t \pr{x^t}\tp\right] = \var(\eps) I_d + \E[x\tp u u\tp x x x\tp].
\end{equation*}
By rescaling and invariance under orthogonal transformations, it actually suffices to show \cref{lemma:expectation}. We then have, still by invariance under orthogonal transformation
\begin{align*}
    \pr{\E[x\tp e_1 e_1\tp x x x\tp]}_{i,j} &= \E[\langle x, e_1 \rangle^2 \langle x, e_i\rangle \langle x, e_j \rangle]\\
    & = \begin{cases}\E[\langle x, e_1 \rangle^4 ] \quad \text{if } i=j=1, \\
    \E[\langle x, e_1 \rangle^2 \langle x, e_2 \rangle^2] \quad \text{if } i=j\neq1, \\
    \E[\langle x, e_1 \rangle^2 \langle x, e_2 \rangle \langle x, e_3 \rangle] \quad \text{if } i \neq j \text{ and } i\neq 1 \text{ and } i\neq j, \\
    \E[\langle x, e_1 \rangle^3 \langle x, e_2 \rangle ] \quad \text{if } 1=i \neq j \text{ or } 1=j \neq i.
    \end{cases}
\end{align*}
By considering the orthogonal transformation $e_2 \mapsto -e_2$, the term is $0$ in the last two cases. This finally gives
\begin{equation*}
    \E[x\tp e_1 e_1\tp x x x\tp] = \kappa e_1 e_1\tp + \pr{\var(\eps) + \gamma \|e_1\|^2} I_d,
\end{equation*}
which leads to \cref{lemma:expectation} by rescaling and rotating $e_1$ to $\frac{u}{\|u\|}$.
\end{proof}

We can now bound the subspace estimation error due to the estimators $\hat{B}_1$ and $\hat{B}_2$ in \cref{algo:MoM}.

\begin{lemm}\label{lemma:angleMoM}
Consider $\hat{B}_1$, $\hat{B}_2$ defined in \cref{algo:MoM} and $B$ defined in \cref{sec:meta}. 
Then if $mT \geq c \frac{\log^6(dmT)\pr{\sigma^4+C^2}r^2}{\kappa^2 \nu^2}d$, with probability at least $1-\frac{2}{m^2T^2}$:
\begin{equation*}
    \sin\theta\pr{\hat{B}_i,B} \leq c'\log^3(dmT)r\frac{\sigma^2+C}{\kappa \bar{\nu}} \sqrt{\frac{d}{mT}}, \quad \text{ for } i=1,2,
\end{equation*}
where $\theta(B_i, B)$ is the principal angle between the subspaces induced by $B_i$ and $B$, $c$ and $c'$ are universal constants.
\end{lemm}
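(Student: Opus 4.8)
The plan is to exhibit $\hat{B}_1$ (and, symmetrically, $\hat{B}_2$) as the top-$r$ eigenspace of an empirical second-moment matrix that concentrates around a population matrix whose leading $r$-dimensional eigenspace is exactly $\mathrm{span}(B)$, and then to conclude with a Davis--Kahan perturbation bound. Let $S=\{\lceil T/2\rceil+1,\dots,T\}$ be the batch of tasks feeding $\hat{B}_1$ and set $\hat\Sigma := \frac1{m\card{S}}\sum_{i\le m,\,t\in S}(y_i^t)^2 x_i^t(x_i^t)\tp$, so that $\hat{B}_1$ spans the top-$r$ eigenspace of the symmetric PSD matrix $\hat\Sigma$. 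Applying \cref{lemma:expectation} task by task and averaging over $t\in S$ identifies the population matrix as $\bar\Sigma := \E[\hat\Sigma] = \frac{\kappa}{\card{S}} M_2^*M_2^{*\top} + \mu I_d$, where $\mu := \frac1{\card{S}}\sum_{t\in S}\pr{\var(\eps)+\gamma\norm{M^{*\,(t)}}^2}\ge 0$. The treatment of $\hat{B}_2$ is identical after exchanging $M_1^*$ and $M_2^*$.

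The next step is the spectral bookkeeping for $\bar\Sigma$. Since $\mathrm{rank}(M^*)=r$, every column $M^{*\,(t)}$ lies in $\mathrm{span}(B)$, so the column space of $M_2^*M_2^{*\top}$ is contained in $\mathrm{span}(B)$; and since $\lambda_r\pr{M_2^*M_2^{*\top}/T}\ge\bar\nu/r>0$ this matrix has rank exactly $r$, hence its column space equals $\mathrm{span}(B)$. Adding the multiple $\mu I_d$ of the identity leaves eigenvectors unchanged, and $\kappa>0$, so the top-$r$ eigenspace of $\bar\Sigma$ is exactly $\mathrm{span}(B)$ and the gap between its $r$-th and $(r+1)$-th eigenvalues is $\delta := \kappa\,\lambda_r\pr{M_2^*M_2^{*\top}/\card{S}} \ge \kappa\,\lambda_r\pr{M_2^*M_2^{*\top}/T} \ge \kappa\bar\nu/r$, using $\card{S}\le T$.

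I would then control the perturbation $\norm{\hat\Sigma-\bar\Sigma}$ with \cref{lemma:MoM1} applied to the index set $S$ (whose cardinality is of order $T$ and exceeds the required threshold under the stated hypotheses): with probability at least $1-\frac1{m^2T^2}$, $\norm{\hat\Sigma-\bar\Sigma}\le c'\log^3(dmT)(\sigma^2+C)\pr{\sqrt{d/(m\card{S})}+d/(m\card{S})}$. The assumption on $mT$ makes $d/(mT)=\bigO{1}$, so the second term is dominated by the first and $\norm{\hat\Sigma-\bar\Sigma}\le c''\log^3(dmT)(\sigma^2+C)\sqrt{d/(mT)}$. By the Davis--Kahan $\sin\theta$ theorem, whenever $\norm{\hat\Sigma-\bar\Sigma}\le\delta/2$ one gets $\sin\theta\pr{\hat{B}_1,B}\le 2\norm{\hat\Sigma-\bar\Sigma}/\delta$; the role of the hypothesis $mT\ge c\log^6(dmT)(\sigma^4+C^2)r^2 d/(\kappa^2\nu^2)$ is precisely to guarantee this last inequality given $\delta\ge\kappa\bar\nu/r$. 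Substituting $\delta\ge\kappa\bar\nu/r$ then yields $\sin\theta\pr{\hat{B}_1,B}\le \frac{2c'' r(\sigma^2+C)}{\kappa\bar\nu}\log^3(dmT)\sqrt{d/(mT)}$, and a union bound over the two batches gives the overall probability $1-\frac2{m^2T^2}$.

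The only genuinely non-routine part is the spectral identification: recognizing through \cref{lemma:expectation} that the population second-moment matrix equals a rank-$r$ PSD matrix with column space $\mathrm{span}(B)$ plus a scalar multiple of $I_d$ --- so that the identity component perturbs neither the leading eigenspace nor the eigengap --- and checking that the gap stays of order $\bar\nu$ even after restricting to a batch of size of order $T/2$. Everything else is invoking \cref{lemma:MoM1} and matching constants; the points to watch are the batch-splitting indexing (which batch defines $\hat{B}_i$, and hence whether $M_1^*$ or $M_2^*$ appears), the normalization discrepancy between $\card{S}$ and $T$ in the definition of $\bar\nu$, and confirming that the stated sample-complexity threshold is exactly what makes the Davis--Kahan step applicable with a non-vacuous bound.
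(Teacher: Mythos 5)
Your proposal follows essentially the same route as the paper: identify the population matrix via \cref{lemma:expectation} as $\kappa$ times a rank-$r$ matrix with column space $\mathrm{span}(B)$ plus a multiple of $I_d$, control the deviation of the empirical fourth-moment matrix via \cref{lemma:MoM1}, note that the eigengap is at least $\kappa\bar\nu/r$, and conclude with Davis--Kahan plus a union bound over the two batches. Your write-up is in fact more explicit than the paper's (which leaves the spectral bookkeeping and the batch normalization implicit), but there is no substantive difference in the argument.
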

\begin{proof}
The proof assumes that the concentration bound given by \cref{lemma:MoM1} holds and uses the Davis-Kahan theorem on it.
Note that $\hat{B}_1$ derives from the top-$r$ SVD of $A = \frac{1}{m\lfloor \frac{T}{2}\rfloor}\sum_{i=1}^m \sum_{t=\lceil\frac{T}{2}\rceil +1} (y_i^t)^2 x_i^t \pr{x_i^t}\tp$.

Thanks to \cref{lemma:MoM1,lemma:expectation}, it holds with probability at least $1-\frac{1}{m^2T^2}$ that
\begin{gather*}
    A = \kappa \bar{\Gamma} + \pr{\var(\eps)+\gamma \mathrm{tr}(\bar{\Gamma})I_d} + E, \\
    \text{ with } \|E\| \leq c'\log^3(dmT)(\sigma^2 +C)\pr{\sqrt{\frac{d}{mT}} + \frac{d}{mT}}.
\end{gather*}
For $T$ large enough, as given in \cref{lemma:angleMoM}, we have $\|E\|\leq \frac{\kappa \bar{\nu}}{2r}$. Now note that $\lambda_r(A-E) - \lambda_{r+1}(A-E) \geq \kappa \frac{\bar{\nu}}{r}$ and $B$ corresponds to the top-$r$ left singular vectors of the $A-E$.

Davis-Kahan theorem then yields that $\sin\theta\pr{\hat{B}_1,B}\leq \frac{2r\|E\|}{\kappa\bar{\nu}}$.
The same argument for $\hat{B}_2$ finally yields to \cref{lemma:angleMoM}.
\end{proof}

\cref{thm:MoM} then follows from \cref{lemma:angleMoM} and Theorem~4 from \citet{tripuraneni2021provable}, using the independence between the features used for the estimator $\hat{B}_i$ and the ones used for the estimator~$\hat{\alpha}_i$.

\medskip

It now just remains to show that $\kappa \geq 0$. We actually have the following series of (in)equalities:
\begin{align*}
    \E[\langle x, e_1\rangle^2 \langle x, e_2\rangle^2] & \leq \E[\langle x, e_1\rangle^2] \E[\langle x, e_2\rangle^2] \\
    & = \E[\langle x, e_1\rangle^2]^2 \\
    & \leq \E[\langle x, e_1\rangle^4].
\end{align*}
The first inequality derives from Cauchy-Schwarz inequality. The equality is by invariance over orthogonal transformation, while the last inequality comes from Jensen inequality. This directly implies that $\kappa \geq 0$.

\medskip

Now show that the case of equality is impossible. In particular, it would imply from the first inequality that $\langle x, e_1\rangle^2 = \langle x, e_2\rangle^2$ almost surely. By invariance under rotation, we can even show that for any $u$ of norm $1$: $\langle x, e_1\rangle^2 = \langle x, u\rangle^2$, which implies that $x=0$ almost surely, hence leading to a contradiction. We thus have $\kappa>0$.

\end{document}